\newtheorem{theorem}{Theorem}[section]
\newtheorem{lemma}[theorem]{Lemma}
\newtheorem{proposition}[theorem]{Proposition}
\newcommand{\reals}{\mathbb{R}}
\renewcommand{\paragraph}[1]{
     \medskip\noindent\textbf{#1.} 
 }
\title{Online Algorithms with Limited Data Retention\thanks{A short version of this paper appeared in the 5th Annual Symposium on Foundations of Responsible Computing (FORC) in 2024.  The authors thank the anonymous referees of FORC, Rad Niazadeh, Stefan Bucher, the Simons Institute for the Theory of Computing and seminar participants at the 2024 SIGecom Winter Meetings and the 2022 C3.ai DTI Workshop on Data, Learning, and Markets.}}
\author{Nicole Immorlica\thanks{Microsoft Research, \texttt{nicimm@microsoft.com}} \and Brendan Lucier\thanks{Microsoft Research, \texttt{brlucier@microsoft.com}}
\and
Markus Mobius\thanks{Microsoft Research, \texttt{mobius@microsoft.com}} 
\and James Siderius\thanks{Tuck School of Business at Dartmouth, \texttt{james.siderius@tuck.dartmouth.edu}.  This research was initiated while the author was a Research Intern at Microsoft Research.}}
\date{}
\begin{document}

\maketitle

\begin{abstract}
We introduce a model of online algorithms subject to strict constraints on data retention. An online learning algorithm encounters a stream of data points, one per round, generated by some stationary process. Crucially, each data point can request that it be removed from memory $m$ rounds after it arrives. To model the impact of removal, we do not allow the algorithm to store any information or calculations between rounds other than a subset of the data points (subject to the retention constraints). At the conclusion of the stream, the algorithm answers a statistical query about the full dataset. We ask: what level of performance can be guaranteed as a function of $m$?

We illustrate this framework for multidimensional mean estimation and linear regression problems.  We show it is possible to obtain an exponential improvement over a baseline algorithm that retains all data as long as possible.  Specifically, we show that $m = \textsc{Poly}(d, \log(1/\epsilon))$ retention suffices to achieve mean squared error $\epsilon$ after observing $O(1/\epsilon)$ $d$-dimensional data points.  This matches the error bound of the optimal, yet infeasible, algorithm that retains all data forever.  We also show a nearly matching lower bound on the retention required to guarantee error $\epsilon$.  One implication of our results is that data retention laws are insufficient to guarantee the right to be forgotten even in a non-adversarial world in which firms merely strive to (approximately) optimize the performance of their algorithms.

Our approach makes use of recent developments in the multidimensional random subset sum problem to simulate the progression of stochastic gradient descent under a model of adversarial noise, which may be of independent interest.
\end{abstract}

\setcounter{page}{0}
\newpage

\section{Introduction}

Modern algorithms run on data.  But as the potential uses for large datasets have expanded, so too have concerns about personal data being collected, retained, and used in perpetuity. Data protection laws are one response to these concerns, providing individuals the right to have their data removed from datasets. The EU's GDPR is a flagship example, encoding a ``right to be forgotten'' and mandating compliance with data deletion requests~\cite{GDPR}.  Similar policies have taken effect across the United States, such as the California Consumer Privacy Act~\cite{CCPA} and Virginia's Consumer Data Protection Act~\cite{CDPA}. These policies specify rules governing deletion requests, but data removal is a complicated process.  Data is not just stored: it is used to make decisions; it touches a vast array of metrics; it trains machine learning models.  What, then, should it mean to remove data from a system?  And how do such requests impact an algorithm's ability to learn?

A growing body of literature approaches these questions through the ``outcome-based" lens of constraining the observed behavior and outcomes of an algorithm.  For example, one might require that once a piece of data has been ``removed'' in response to a request, the algorithm's behavior should be indistinguishable (in a cryptographic sense) from one that does not have access to the data.  Formalizing this idea leads to a myriad of details and modeling choices, and multiple notions of deletion-respecting algorithms have been proposed~\cite{cao2015towards,garg2020formalizing,cohen2023control}. An alternative approach is to directly impose restrictions on an algorithm's internal implementation that regulate and define the data removal process.  This ``prescriptive'' approach is especially appealing from a regulatory perspective, since such restrictions provide clear guidance on what is and is not allowed (and, by extension, what constitutes an enforceable violation).  But on the other hand, the actual implications of any given implementation restriction are not necessarily clear \emph{a priori}.  Constraints that appear very restrictive at first glance may still allow undesirable behavior through clever algorithm design.  This undesirable behavior may be exhibited even by non-adversarial firms that simply wish to optimize their performance.  Thus, for any given definition of what is meant by an implementation that respects data deletion, it is crucial to explore the  outcomes that are generated by optimal (or near-optimal) algorithm design.

In this paper we explore the latter approach.  We consider a stark framework in which an online algorithm can retain \emph{no state} beyond stored data, which is subject to deletion requests.  We show that even under such a restriction and even for simple statistical tasks like mean estimation, an algorithm that can preemptively delete data from its dataset can effectively retain information about data that was supposedly removed while still following the letter of the law (i.e., limited data retention).  Moreover, we show that one can use this flexibility to substantially improve performance on statistical tasks relative to naive baseline algorithms that follow the spirit of the law (i.e., the right to be forgotten).\footnote{For example, by retaining all data as long as is allowed by regulation and then using optimal statistical estimators on the retained dataset.}  These results suggest that even in a world where an algorithm can retain no internal state whatsoever beyond its dataset, the curation of the dataset itself can be used to encode substantial information about data that has been supposedly removed, and even non-adversarial designers who seek only to maximize performance may naturally develop algorithms that leak information that was requested to be deleted.  These results emphasize the importance of laws that regulate outcomes as well as process.

\paragraph{A Framework for Limited Data Retention}
We propose a framework for algorithm design built upon a literal interpretation of a request to remove data.  A sequence of data points is observed by a learning algorithm that actively maintains a subset of the data that has been observed so far.  Each data point can come with a request that it be stored for only $m$ rounds, after which it must be removed from the algorithm's subset.\footnote{All of our results extend directly to model where a removal request can be made in any round after the data arrives (not just at the moment of arrival), and the data must be removed within $m$ rounds of the request.} We think of $m$ as a legally-mandated period of time after which the algorithm is obligated to fulfill the request.\footnote{For example, under GDPR Article 12, any request to delete personal data must be honored ``Without undue delay and in any event within one month of receipt of the request''~\cite{GDPR}.}  The algorithm is free to discard data earlier, if desired; the only constraint is that data cannot be retained beyond the $m$ rounds.

Of course, removing data points from the ``official'' dataset has no bite without additional restrictions on what else the algorithm can store. To clarify the impact of removing data, we impose a crucial modeling assumption: the algorithm \emph{cannot retain any state} between rounds other than the dataset itself.  In other words, any statistics or intermediate calculations performed by the algorithm must be recomputed, when needed, using only the data currently in the dataset.\footnote{One can equivalently think of this as a policy describing which statistics can be kept between rounds; namely, those that could be directly recomputed using only the retained data.}

Such an algorithm can be described by two procedures: one that maintains the dataset (i.e., given the current subset and an incoming data point, choose which subset to keep) and one that answers a query about the full data stream given the current subset, possibly employing some non-standard estimator tailored to the data retention strategy.

We initiate an exploration of this framework through the lens of two standard statistical tasks: mean estimation and linear regression.  In the case of mean estimation, each data point is a drawn from an unknown distribution over $\reals^d$ and the algorithm's goal is to recover the distribution's mean.  In the case of linear regression, each data point is a pair $(x,y)$ where $x$ is a $d$-dimensional characteristic vector and $y$ is generated through a linear function of $x$ plus random noise, and the goal is to simulate the linear function on challenge queries.  In each case, the mean squared error achievable by an estimator that can retain an entire data stream of $T$ data points (without any requirement to remove data) improves linearly with $T$.  We ask: what error is achievable by an algorithm that respects requests to remove incoming data points within $m$ rounds?

One baseline algorithm is to simply retain all data as long as possible.  That is, the algorithm retains all of the previous $m$ data points, then returns the maximum likelihood estimator given the sample for the target query.  This approach is equivalent to keeping a uniform subsample of $m$ draws from the underlying distribution.  For the mean estimation and linear regression tasks, a uniform subsample of size $m$ yields an average squared error no better than $\Theta(1/m)$, even for draws from a Gaussian distribution.  In other words, this baseline would need to retain data for $m = O(T)$ rounds to achieve error comparable to what is attainable from the entire data stream.

\paragraph{An Improved Data Retention Policy}
We show that it is possible to achieve an exponential improvement relative to the baseline solution described above.  We present an algorithm for mean estimation that achieves a loss guarantee comparable to the optimal estimator over all $T$ data points, but that retains each data point for only $m = \textsc{Poly}(d, \log(T))$ rounds.  In more detail, if $m$ is at least $\Theta(d  \log(d/\epsilon))$, then for any query time $T > C d/\epsilon$ (where $C$ is a constant depending on the input distribution) the expected squared error will be at most $\epsilon$. For linear regression we achieve a similar guarantee, with $m = \Theta(d^2 \log(d) \log(d/\epsilon))$.  Our algorithms are polytime: each update step takes time linear in $d$ and $1/\epsilon$.

The idea behind these algorithms is to simulate the progression of stochastic gradient descent (SGD) on the incoming data stream.  As data arrives, an (unrestricted) learning algorithm could use SGD to adaptively maintain an estimate of the statistic of interest, but our framework rules out direct storage and maintenance of such an estimate. Instead, our algorithm adaptively curates a subset of recent data. Intuitively, since there are exponentially many possible subsets of recent data, there is very likely to exist \emph{some} subset $S$ for which the maximum likelihood estimator evaluated on $S$ is very close to any given target estimate value.  Recent developments in the multidimensional random subset sum problem~\cite{becchetti2022multidimensional} make this precise: very roughly speaking, the collection of all subsets of a pool of independent data points behaves similarly, in terms of the expected minimum distance of their averages to a target point, to exponentially many independent draws.  This drives the exponential improvement in our error rate, relative to preserving the full $m$ most recent data points.  In essence, the algorithm intentionally curates a subset of data that is intended to be ``more representative'' than independent samples.

We also present a nearly-matching lower bound: if $m = o\left( \frac{d \log(1/\epsilon)}{\log(d)\log\log(1/\epsilon)} \right)$ then the algorithm must have error greater than $\epsilon$ with constant probability, regardless of the output function used to map the final subsample to an estimate of the mean.  To provide some intuition for this result, note that an online algorithm must accomplish two tasks: it must learn the statistic of interest (say, the mean of the input distribution), and it must generate a subset of data from which that statistic can be recovered.  We show that even if we remove the learning component by endowing the algorithm with advance knowledge of (say) the distribution's mean, the lower bound would still apply to the task of constructing a subset that encodes it.  Indeed, since the algorithm has only $m$ samples from which to generate a final dataset, and hence only $2^m$ possibilities for the final subset itself, a union bound suggests that one cannot hope to obtain an average error rate better than exponentially small in $m$, regardless of the estimator used to convert datasets to estimates.

\paragraph{Roadmap}
In the remainder of this section we review additional related work.  In Section~\ref{sec:model} we describe our algorithmic framework and how it applies to the examples of mean estimation and linear regression.  In Section~\ref{sec:mean} we focus on mean estimation, starting with a simplified algorithm and then extending to an improved multi-dimensional estimation method in Section~\ref{sec:mean.alg.improved}.  Our lower bound for mean estimation appears in Section~\ref{sec:mean.lowerbound}.  Our analysis of the linear regression task appears in Section~\ref{sec:regression}.  We conclude and suggest future research directions in Section~\ref{sec:conclusion}.

\paragraph{Related Work}
There is a substantial line of literature that explores definitions of data removal, especially as it relates to data protection and privacy laws.  The literature on machine unlearning, initiated by~\cite{cao2015towards}, explores the process of updated a trained machine learning model so that it cannot leak information about to-be-deleted data.  This has led to a vast body of work exploring different definitions and designs; see~\cite{nguyen2022survey,xu2023machine} for some recent surveys of this literature.  Beyond machine learning contexts, a notion of data deletion in terms of not leaking information about the data and maintaining secrecy, termed deletion-as-confidentiality, was proposed by~\cite{garg2020formalizing}.  A more permissive notion that constrains the leakage of information only after a removal request, deletion-as-control, was explored by~\cite{cohen2023control}.  Such works employ outcome-based constraints on data leakage, often in combination with internal state restrictions. In contrast, we explore a prescriptive framework that directly restricts an algorithm's implementation and explore the extent to which these restrictions do (or do not) constrain the algorithm's achievable performance and observable outcomes. While our algorithm respects certain notions of random differential privacy~\cite{hall2011random}, we show that simple implementation restrictions to delete data points is not sufficient to retain full differential privacy of the deleted data (as we demonstrate in Appendix~\ref{app:df}).  

Our work is also related to a line of literature on non-uniform subsampling for linear regression.  The typical goal is to draw a sample from a large (or infinite) pool of potential data items $(x,y)$ to increase accuracy of resulting models. Early works employed leverage scores to weight the predictor vector $x$~\cite{drineas2012fast,ma2014statistical}. This approach has been extended to other norms via low-distortion embeddings~\cite{meng2013low} and improved by including outcomes $y$ via importance weighting~\cite{dhillon2013new,zhu2016gradient,ting2018optimal}. In contrast, our approach is not based on independent sampling but rather adaptive sample maintenance with elements added and removed over time.

Our approach is also closely related to coreset construction~\cite{feldman2020introduction}, in which the goal is to develop a highly compressed summary of a large dataset that retains the ability to answer queries from a given query class.  Effective constructions are known for many learning problems, including variations of regression for numerous risk functions~\cite{bachem2017practical}.  In principle a coreset can retain additional information beyond an (unweighted) subset of the original data, whereas our framework motivates us to focus specifically on unweighted subsampling.

From a technical perspective, our constructions use online implementations of stochastic gradient descent (SGD), which itself makes heavy use of sampling~\cite{bottou2010large}.  Our algorithms effectively simulate the progression of SGD using subsamples to approximate estimates. These approximations introduce some poorly-controlled noise to the SGD process, which necessitates an analysis that is robust to adversarial noise; for this we provide a slight variation on an SGD analysis due to~\cite{rakhlin2012making}.  To show that small subsets of data suffice to approximate the evolution of a sequence of improving estimates of regression coefficients, we employ recent advances in the theory of the random subset sum problem (RSS)~\cite{becchetti2022multidimensional,Lueker:1998,da2023revisiting}.  The application of the RSS problem in contexts where SGD is used has also been explored in literature related to the Strong Lottery Ticket Hypothesis (SLTH) in learning theory~\cite{ferbach2022general,becchetti2022multidimensional,pensia2020optimal}. However, the application of RSS to our setting requires a novel analysis.

\section{Model and Preliminaries}
\label{sec:model}

\paragraph{Online Algorithms for Statistical Estimation}
We consider a stream of data drawn from a stationary distribution.  There is a set $\mathcal{X}$ of potential data \emph{items} $x \in \mathcal{X}$.  The stream generates a (possibly infinite) sequence of items $x_1, x_2, \dotsc$.  We say that item $x_t$ arrives in round $t$.  An algorithm $A$ observes items $x_1, x_2, \dotsc$ in sequence and processes them online.  The algorithm maintains a \emph{state} that is updated in each round; we write $S_t$ for the state at the conclusion of round $t$.  The algorithm's update process can be viewed as a (possibly randomized) subroutine $A_{upd}$ that takes as input $t$, $x_t$, and $S_{t-1}$, and returns a new state $S_t$.\footnote{We will assume that each item is labeled with the round in which it arrives, so in particular the algorithm is aware of the current round.}  

We assume the data stream is generated by a stationary process, meaning that each item is drawn i.i.d.\ from a distribution $F$ over $\mathcal{X}$.  Distribution $F$ is assumed to lie in some known class $\mathcal{F}$ of potential distributions, but $F$ is not known to the algorithm or the designer.

After some (possibly unknown, possibly random) round $T \geq 1$ the algorithm will be asked a query $Q$ from a class of potential queries $\mathcal{Q}$.  Each $Q$ is a function $Q \colon \mathcal{F} \to \reals^D$ where $D \geq 1$ is the dimension of the query.  For example, in mean estimation we have $D=1$.  We view a query as a statistical property of the distribution $F$.  After being posed the query, the algorithm returns an output $z$ that can depend on $T$ and the state $S_T$.  We write $A_{out}(S_T, T)$ for the output of algorithm $A$ given $T$ and $S_T$.

The algorithm's goal is to minimize the error of the output with respect to a given distance metric $\textsc{dist}(\cdot,\cdot)$.  That is, the algorithm's expected loss for a given choice of $Q$, $F$, and $T$ is $L(A;Q,F,T) = E[\textsc{dist}(Q(F), A_{out}(S_T,T))]$ where the expectation is with respect to the realization of items and any randomness in the algorithm.  We say that the algorithm achieves error $\epsilon$ for a given $T$ if its expected loss is at most $\epsilon$ in the worst case over $Q \in \mathcal{Q}$ and $F \in \mathcal{F}$.

\paragraph{Adaptive Subsampling}
We focus on algorithms that retain no state beyond a subsample of the previously-observed data.  We call these \emph{subsampling algorithms}.  For such an algorithm, the state $S_t$ is an (ordered) subset of $\{x_1, \dotsc, x_t\}$.  We emphasize that because $S_t$ fully describes the state of the algorithm following round $t$, the algorithm cannot retain any other information about the stream beyond the subset retained.  We define $S_0 = \emptyset$ for notational convenience.  

We say that a subsampling algorithm $A$ satisfies the \emph{$m$-recency property} (or just \emph{$m$-recency}) if, in each round $t \geq m$, $S_t \subseteq \{x_t, x_{t-1}, \dotsc, x_{t-m+1}\}$.  That is, $S_t$ is always a subset of the $m$ most-recently-seen items.  Note that an $m$-recent algorithm must have $|S_t| \leq m$; this motivates us to sometimes refer to $m$ as the \emph{memory} of the algorithm.
We say that the algorithm satisfies $\emph{recency}$ when $m$ is clear from context.

\subsection{Framework Examples: Mean Estimation and Linear Regression}
\label{sec:model.examples}

\paragraph{Mean Estimation} The set of potential data items is $\mathcal{X} = \reals^d$ where $d \geq 1$ is the data dimension.  Let $\theta \in \reals^d$ denote the mean of distribution $F$.  The query posed at time $T$ is to estimate $\theta$: i.e., $\mathcal{Q} = \{Q\}$ where $Q(F) = E[F] = \theta$.  The algorithm's output at time $T$ is an estimate $\hat{\theta}$ of the mean, and the algorithm's goal is to minimize $E[||\theta - \hat{\theta}||_2^2]$, the expected squared $\ell_2$ distance. 

To define $\mathcal{F}$ we will make two further assumptions on distribution $F$.  Write $F_j$ for the marginal distribution of coordinate $j$.  Then for each $j$, we assume that there exist constants $\sigma$, $\alpha$, and $\gamma$ such that $\textsc{var}(F_j) \leq \sigma^2$ and $F_j$ has density at least $\alpha$ on the interval $[\theta_j - \gamma, \theta_j + \gamma]$. These assumptions essentially state that the distribution has bounded second moments and is well-behaved around its mean.  For example, these assumptions are satisfied by any correlated Gaussian distribution $N(\theta, \Sigma)$ such that $\Sigma$ has eigenvalues lying in bounded range $[\lambda_0, \lambda_1]$ with $\lambda_0 > 0$.

\paragraph{Linear Regression}  The set of data items is $\mathcal{X} = \reals^d \times \reals$ for some $d \geq 1$.  We will write $(x_t,y_t)$ for a data item, where $x_t \in \reals^d$ is the vector of predictors and $y_t \in \reals$ is the outcome.  Each $x_t$ is drawn from a distribution $G$ over $[0,B]^d$, and then $y_t$ is generated as $y_t = \langle\theta, x\rangle + \epsilon$ where $\langle\cdot, \cdot\rangle$ denotes inner product, $\epsilon \sim N(0,\sigma^2)$ is mean-zero Gaussian noise, and $\theta \in \reals^d$ is a collection of coefficients.\footnote{We can easily extend this to allow $y$ to be multidimensional as well, treating each coordinate of $y$ as a separate single-dimensional regression task.}

We will make the standard assumption that distribution $G$ is well-conditioned in the following sense.  Given a number of rounds $t$, write $X \in \reals^{t \times d}$ for the empirical design matrix whose rows consist of the items $x_1, \dotsc, x_t$.  Then the empirical second moment matrix $\frac{1}{t}X^T X$ is positive semi-definite, and for all $t \geq d$ we require that the distribution $G$ is such that $X^T X$ is invertible with probability $1$.  Moreover, $E[ (x^T x) ]$ has bounded spectral norm, with eigenvalues lying in a bounded range $[\lambda_0, \lambda_1]$ with $\lambda_0 > 0$.  For example, this rules out degenerate scenarios where the vectors $x_j$ lie in a lower-dimensional subspace of $\reals^d$ and hence the distribution has zero density.

The query posed at time $T$ will be a prediction query, described by some $x \in \reals^d$ with $||x||_2^2 \leq 1$.  The desired output is $\langle\theta, x\rangle$.  That is, we can write $\mathcal{Q} = \{Q_x \colon x \in \reals^d, ||x||_2^2 \leq 1 \}$ where $Q_x(F) = \langle\theta, x\rangle$ recalling that $\theta$ is a parameter determined by $F$.  The algorithm outputs a prediction $\hat{y} \in \reals$, and seeks to minimize $E[|\hat{y} - Q_x(F)|^2]$.  We say that the algorithm achieves $\ell_2$ error (or risk) $\epsilon$ if the expected squared prediction error is at most $\epsilon$ in the worst case over all $F$ and $Q_x$.  We emphasize that the query vector $x$ need not be drawn from the training distribution $G$, so prediction error is evaluated in the worst case rather than with respect to a distribution over the query $Q_x$.

One could additionally ask for the stronger requirement that the algorithm achieve low error on the learned regression parameters, rather than low expected prediction error.  That is, one could take $\mathcal{Q} = \{Q\}$ where $Q(F) = \theta$, and then evaluate error as $E[||\hat{\theta} - \theta||_2^2]$.  
The algorithms we construct in this paper will actually satisfy this stronger notion of approximation.

\subsection{A Batch Formulation}

Any algorithm that satisfies $m$-recency  can be thought of in the following nearly-equivalent way, which we call the batched model.  The data stream provides data in \emph{batches} of $m$ items.  In each round, $m$ new items arrive, each drawn independently from distribution $F$.  Write $M_t$ for the set of samples that arrive in round $t$.  The algorithm then updates its state $S_t$ to be a subset of $M_t$.  The choice of $S_t \subseteq M_t$ can depend arbitrarily on $t$, $M_t$, and the previous state $S_{t-1}$.  After the final round $T$, the algorithm generates an output based on the final state $S_T$.

Any algorithm for the batched model can simulate an algorithm with $m$-recency in the streaming model, and vice-versa, using a constant factor additional memory.  This will be helpful throughout the paper, as it will often be convenient to design and describe algorithms in the batched model.  The full proof appears in the appendix.

\begin{proposition}
\label{prop.batch}
Suppose $A$ is an algorithm with $m$-recency for the streaming model that achieves error $\epsilon$ after $T$ rounds.  Then there exists an algorithm $A'$ for the batched model with $m$ memory that achieves error $\epsilon$ in $\lceil T/m \rceil$ rounds. 

Suppose $A$ is an algorithm for the batched model with $m$ memory that achieves error $\epsilon$ after $T$ rounds.  Then there exists an algorithm $A'$ with $2m$-recency that achieves error $\epsilon$ in $mT$ rounds.  
\end{proposition}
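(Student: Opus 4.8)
I would prove each of the two directions by an explicit simulation, in each case verifying two things: (i) the simulating algorithm respects the relevant state-size / recency constraint, and (ii) the stream it feeds to the simulated algorithm has exactly the same distribution as that algorithm's native input, so that the worst-case error guarantee transfers verbatim. The one structural fact I would isolate first is that any subsampling update satisfies $S_t \subseteq S_{t-1}\cup\{x_t\}$ --- once an item leaves the retained subset it can never come back, since the update only has access to $S_{t-1}$ and $x_t$ --- so the retained subset is always a sufficient summary for continuing a simulation.

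\textbf{Streaming $\Rightarrow$ batched.} Given an $m$-recent streaming algorithm $A$ (writing $S^A_i$ for its state after $i$ items), partition the first $T$ stream positions into $\lceil T/m\rceil$ consecutive blocks of size $m$ (the last possibly short), identify the $m$ items of batch $M_\tau$ with the stream items of block $\tau$, and have $A'$ maintain the invariant that after round $\tau$ its state equals $S^A_{\tau m}$. This is legal since $m$-recency gives $S^A_{\tau m}\subseteq\{x_{(\tau-1)m+1},\dots,x_{\tau m}\}=M_\tau$. To go from round $\tau-1$ to round $\tau$, $A'$ holds $S^A_{(\tau-1)m}$, receives $M_\tau$, and runs $A_{upd}$ iteratively over the $m$ items of $M_\tau$ to obtain $S^A_{(\tau-1)m+1},\dots,S^A_{\tau m}$, keeping the last. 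At the final round it instead iterates only up to position $T$ and returns $A_{out}(S^A_T,T)$; here $S^A_T$ is used only transiently, so it need not be a subset of the current batch. The stored state always has size at most $m$ (transient working memory at most $2m$). Since each batch is $m$ i.i.d.\ draws from $F$, the simulated stream is i.i.d.\ $F$, so $A'$ reproduces $A$'s output distribution and hence its error $\epsilon$.

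\textbf{Batched $\Rightarrow$ streaming.} Given a batched algorithm $A$ with $m$ memory, have $A'$ group the $mT$ stream rounds into $T$ batches of $m$ consecutive rounds. Within batch $\tau$, during rounds $(\tau-1)m+1,\dots,\tau m-1$, $A'$ just appends the incoming item while still retaining the subset $S^A_{\tau-1}$ from the previous batch; at round $\tau m$ it then holds $S^A_{\tau-1}$ together with all of $M_\tau$, computes $S^A_\tau=A_{upd}(\tau,M_\tau,S^A_{\tau-1})\subseteq M_\tau$, and discards everything else. I would check $2m$-recency by noting that at any round $t$ the stored set lies within the last $2m$ stream positions: $S^A_{\tau-1}\subseteq M_{\tau-1}$ occupies positions $(\tau-2)m+1,\dots,(\tau-1)m$, while the partially accumulated current batch occupies at most the last $m$ positions. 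After round $mT$, $A'$ outputs $A_{out}(S^A_T,T)$. As before, each batch is $m$ i.i.d.\ draws from $F$, matching $A$'s input distribution, so the error is unchanged.

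\textbf{Where the work is.} The only real content is the bookkeeping in the second direction --- pinning down why $2m$ rather than $m$ rounds of recency are needed: the emulator is forced to keep the previously-retained subset \emph{while} it accumulates the whole fresh batch, before it is allowed to invoke the batch update. A secondary point, in the first direction, is observing that the output at the terminal round may be computed in transient memory and so does not have to be expressible as a valid batched state. Everything else --- the distributional equivalence and the resulting transfer of the error bound --- is immediate.
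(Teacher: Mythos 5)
Your proposal is correct and follows essentially the same simulation argument as the paper's proof: in one direction the batched algorithm iterates $A_{upd}$ over each batch and keeps the final state (valid by $m$-recency), and in the other the streaming algorithm appends items while holding the previous batch's retained subset, invoking the batch update at multiples of $m$, which is exactly where the factor-$2m$ recency arises. Your handling of a final short block and the transient terminal state is a minor refinement the paper elides, but the substance is identical.
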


\section{Mean Estimation}
\label{sec:mean}

In this section we focus on the mean estimation problem described in the previous section.  For this problem, the optimal estimator for the mean requires $\Omega(1/\epsilon)$ data samples to achieve expected error $O(\epsilon)$ in the worst case over distributions in our class $\mathcal{F}$.  Our main result in this section is an  algorithm that achieves this same asymptotic error rate while satisfying $m$-recency, with $m = O(\log^2(1/\epsilon))$.

\begin{theorem}\label{thm:mean}
    Fix any $\epsilon > 0$ and dimension $d \geq 1$.  There exists a subsampling algorithm $A$ with recency such that if $T > C_1 d/\epsilon$ and $m \geq C_2 d \log(d/\epsilon)$, where $C_1$ and $C_2$ are constants that depend on $\sigma, \gamma, \alpha)$ (parameters of the data distribution) then the expected squared loss is at most $\epsilon$.  The update in each round, as well as generating the final output, can each be performed in time $\textsc{Poly}(d, 1/\epsilon)$.
\end{theorem}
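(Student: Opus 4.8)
The plan is to work in the batched model of Proposition~\ref{prop.batch}: each round a fresh batch $M_t$ of $m$ i.i.d.\ draws from $F$ arrives and the retained state $S_t$ must be a subset of $M_t$. The algorithm maintains $S_t$ so that its sample mean $\hat\theta_t := \mathrm{mean}(S_t)$ tracks the iterates of stochastic gradient descent on the $1$-strongly convex objective $f(w) = \tfrac12\,E_{x\sim F}\|x-w\|_2^2$, whose minimizer is the target $\theta = E[F]$. An unbiased stochastic gradient of $f$ at the current iterate $\hat\theta_{t-1}$, estimated from the fresh batch, is $\hat\theta_{t-1} - \bar M_t$ where $\bar M_t := \mathrm{mean}(M_t)$; so the ideal update with step size $\eta_t$ is $\tilde\theta_t := (1-\eta_t)\hat\theta_{t-1} + \eta_t \bar M_t$. (With $\eta_t = 1/t$ this is exactly the running average of all $mt$ samples seen so far.) Since the algorithm cannot store $\tilde\theta_t$, it instead searches the current batch $M_t$ for a subset $S_t$ whose mean is as close as possible to $\tilde\theta_t$ and keeps that subset; the final output is $\hat\theta_T = \mathrm{mean}(S_T)$.

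The engine for the search step is the multidimensional random subset sum theorem of~\cite{becchetti2022multidimensional}. Because each marginal $F_j$ has density at least $\alpha$ on an interval of radius $\gamma$ about $\theta_j$, a pool of $m$ i.i.d.\ samples has, except with probability $e^{-\Omega(m)}$, the property that every point in an $\Omega(1)$-radius ball around $\bar M_t$ is matched by the mean of some subset of the pool up to error $\delta = e^{-\Omega(m/d)}$, and such a subset can be produced by an efficient greedy/iterative procedure. Two things must be checked. First, that $\tilde\theta_t$ always lies in this reachable ball: writing $e_t := \hat\theta_t - \theta$ and $\zeta_t := \bar M_t - \theta$, we have $\tilde\theta_t - \bar M_t = (1-\eta_t)(e_{t-1} - \zeta_t)$, which is $O(1)$ with high probability as long as $\|e_{t-1}\|$ is under control; this makes the argument an induction that carries a high-probability bound on $\|e_{t-1}\|$ forward, together with a union bound over rounds for the exponentially small failure events (of the subset-sum property and of the concentration of each $\zeta_t$). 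Second, that the per-round work is $\textsc{Poly}(d, m) = \textsc{Poly}(d, \log(d/\epsilon))$, which is within the claimed $\textsc{Poly}(d, 1/\epsilon)$, and likewise for the final output.

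It remains to control the error of this noisy SGD. Writing $\hat\theta_t = \tilde\theta_t + \xi_t$ with $\|\xi_t\|_2 \le \delta$, the recursion becomes
\[
e_t = (1-\eta_t)\,e_{t-1} + \eta_t\,\zeta_t + \xi_t ,
\]
where the $\zeta_t$ are independent and mean-zero with $E\|\zeta_t\|_2^2 \le \sigma^2 d/m$, while the $\xi_t$ are arbitrary vectors of norm at most $\delta$ that may depend on the state and so must be treated adversarially. The stochastic part behaves as in the standard strongly-convex analysis of~\cite{rakhlin2012making} and contributes $O(\sigma^2 d/(mT))$ to $E\|e_T\|_2^2$, which after $n = mT$ total samples is the optimal $\Theta(\sigma^2 d/n)$ rate and drops below $\epsilon$ once $T \ge C_1 d/\epsilon$. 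The adversarial part is the crux: under the averaging schedule $\eta_t = 1/t$ the coefficient of $\xi_t$ in the unrolled $e_T$ is $t/T$, so these errors would accumulate to $\Theta(\delta T)$, unbounded in $T$. The fix — the ``slight variation'' on Rakhlin et al.\ alluded to in the introduction — is to stop shrinking the step size once the stochastic error has been driven down, switching to a small constant step $\eta_t \equiv \beta$; then old $\xi$'s are damped geometrically, the adversarial contribution is $O(\delta/\beta)$ independent of $T$, and the stochastic floor is $O(\beta\sigma^2 d/m)$. Balancing $\beta$ makes $E\|e_T\|_2^2$ at most $\epsilon$ provided $\delta$ is polynomially small in $\epsilon/d$, and since $\delta = e^{-\Omega(m/d)}$ this holds as soon as $m \ge C_2\, d\log(d/\epsilon)$.

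The main obstacle is exactly this last point: the subset-approximation errors $\xi_t$ are not mean-zero and cannot be assumed uncorrelated with the algorithm's state, so the only lever for making their accumulation negligible is to drive the per-step precision $\delta$ exponentially small in $m/d$ — which is precisely what the random-subset-sum machinery delivers, and is the source of the exponential improvement over the baseline that retains the $m$ most recent points. A secondary burden is sustaining, round by round, the coupled invariant that $\|e_{t-1}\|$ stays small enough for $\tilde\theta_t$ to remain inside the region where the subset-sum guarantee applies, while union-bounding the exponentially small failure probabilities over all $T$ rounds.
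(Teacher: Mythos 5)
There is a genuine gap, and it sits exactly at the step that determines the memory bound. You invoke the multidimensional random subset sum theorem of~\cite{becchetti2022multidimensional} and assert a per-step encoding precision $\delta = e^{-\Omega(m/d)}$. That is not what the theorem gives: to reach $\ell_2$ precision $\delta$ it requires a pool of size $m \gtrsim d^2\log^2(d/\delta)$, i.e.\ $\delta = e^{-\Omega(\sqrt{m}/d)}$. Since the SGD analysis (yours or the paper's) needs $\delta$ polynomially small in $\epsilon/d$, this route forces $m = \Omega(d^2\log^2(d/\epsilon))$ — which is precisely the weaker bound of Proposition~\ref{prop:mean.simple}, not the $m \geq C_2\, d\log(d/\epsilon)$ claimed in Theorem~\ref{thm:mean}. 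The paper gets the linear-in-$d$ bound by a different device: it runs $d$ \emph{independent one-dimensional} instances of the algorithm in parallel, each with its own segment of memory of size $O((\alpha\gamma)^{-1}\log(d/\epsilon))$, and for each coordinate applies Lueker's one-dimensional subset-sum theorem (Theorem~\ref{thm:subset.sum}) after using the density assumption to extract, via a Chernoff bound, a sub-population of samples that is uniform on an interval around the target. Two further consequences of your choice: (i) the multidimensional RSS result as used in the paper applies to (near-)Gaussian inputs, whereas Theorem~\ref{thm:mean} only assumes the marginals have density $\alpha$ on a $\gamma$-interval, so your argument does not cover the stated distribution class; and (ii) your claim that the approximating subset ``can be produced by an efficient greedy/iterative procedure'' is unsupported — the paper's runtime bound comes from brute-forcing all $2^m$ subsets, which is $\textsc{Poly}(1/\epsilon)$ only because each coordinate's pool has size $O(\log(1/\epsilon))$; with $m = d^2\log^2(d/\epsilon)$ this is superpolynomial.

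Your treatment of the SGD noise is essentially sound but mischaracterizes the paper. The paper does not switch to a constant step size: its Lemma~\ref{lem:mean.sgd} keeps $\eta_t = 1/(\lambda t)$ and instead demands $E[\|\zeta_t\|^2] \leq \Gamma^2/(\lambda^2 T^3)$ per round, so the $\Theta(\delta T)$ accumulation you identify is tamed by making $\delta$ polynomially small in $1/T$ (which the exponential subset-sum precision supplies for free). Your constant-step alternative also works and is a legitimate variant, though you should note it costs an extra $\log(1/\epsilon)$ factor in $T$ for the geometric burn-in, and you would still need to state and prove the corresponding robust-SGD lemma. Your point about maintaining the invariant that the target $z_t$ stays inside the window where the subset-sum guarantee applies is well taken — the paper is terse on this — but it does not rescue the memory bound.
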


\subsection{Algorithm description}
\label{sec:mean.alg.simple}

Before presenting our algorithm that satisfies the conditions of Theorem~\ref{thm:mean} we will first describe a more straightforward algorithm, listed as Algorithm~\ref{alg:mean_sgd}, based on stochastic gradient descent.  Algorithm~\ref{alg:mean_sgd} is equivalent to our final algorithm in the single-dimensional case $d=1$, but for $d > 1$ our analysis of Algorithm~\ref{alg:mean_sgd} requires additional assumptions on the input distribution, has suboptimal dependence on the dimension $d$, and requires execution time that is exponential in $d$.  We will show how to modify Algorithm~\ref{alg:mean_sgd} to satisfy the conditions of Theorem~\ref{thm:mean} in Section~\ref{sec:mean.alg.improved}.

\begin{algorithm}
\caption{Simple Subsampling for Mean Estimation}\label{alg:mean_sgd}

Input: learning rate $\eta_t > 0$, stream of data batches $M_1, M_2, \dotsc, M_T$ each containing $m$ elements of $\reals^d$
\vspace{2mm}

Round $1$ initialization: $S_1 \leftarrow M_1$ \\

\For{rounds $t = 2, 3, \cdots , T$}{
    $s_{t-1} \leftarrow \textsc{avg}(S_{t-1})$\;

    Arbitrarily partition $M_t = R_t \cup N_t$ with $|R_t| = |N_t| = m/2$\;

    $y_t \leftarrow \textsc{avg}(R_t)$\;

    $z_t \leftarrow s_{t-1} + \eta_t(y_t - s_{t-1})$\;

    Choose $S_t \in \arg\min_{S \subseteq N_t}\{ ||z_t - \textsc{avg}(S)||_2 \}$\;
}

\Return $\textsc{avg}(S_T)$

\end{algorithm}

Algorithm~\ref{alg:mean_sgd} is a subsampling algorithm in the batched model that attempts to simulate the progression of stochastic gradient descent.  Recall that at the beginning of each round $t$, a batched algorithm has a state $S_{t-1}$ that is a subset of data samples $M_{t-1}$ observed in the previous round.  We will write $s_{t-1}$ for the average of the samples in $S_{t-1}$.  In round $t$ a new set $M_t$ of data items arrive, with $|M_t| = m$.  We will partition $M_t$ arbitrarily into two sets, $R_t$ and $N_t$, with $|R_t| = b \geq 1$ (where $b$ is a parameter that will be set later to $m/2$).  The set $R_t$ is used to calculate a gradient that will guide our subsampling.  To this end, the learner calculates the average of the items in $R_t$, call this $y_t$.  We think of $(s_{t-1}-y_t)$ as the proposed gradient.  We then set $z_t = s_{t-1} + (y_t-s_{t-1})\eta_t$ where $\eta_t > 0$ is the (round-dependent) learning rate of our procedure.  We think of $z_t$ as a target state.

In standard SGD one would update the state by setting $s_t = z_t$.  In our subsampling algorithm we are unable to store $z_t$ directly.  Rather, we will use $z_t$ to guide our choice of subsample which will be taken from $N_t$.  Specifically, we will choose $S_t$ to be whichever subset of $N_t$ has average closest to $z_t$ (in $\ell_2$ distance).  The process then repeats with the next round.  When the final round terminates, the algorithm returns the average of the items in $S_T$ as its estimation of the mean.

Notice that Algorithm 1 satisfies $m$-recency: After each batch, the algorithm ``deletes'' all of the data points appearing in the previous batch. However, the choice of state $S_t \subseteq M_t$ can encode information about the previous state $S_{t-1}$, including the previous batch $M_{t-1}$ (and by induction, all other previous batches). In this sense, the data is not truly deleted, and classical notions of differential privacy can still be violated despite the algorithm not explicitly retaining the data.\footnote{A concrete example of this is provided in Appendix~\ref{app:df}. For an intuitive understanding, consider the mean estimation problem in one dimension. One can imagine a situation where the first batch has a data point $x^*$ that is an extreme positive outlier, so the $S_1$ initialization will lead to a large positive mean for $\textsc{avg}(S_1)$. In round 2, while $x^*$ will be deleted, $S_2$ will be chosen to be a singleton consisting of the largest element of $N_2$, $S_2 = \{\tilde{x}\}$. In round 3, $S_3$ will be chosen to consist of the largest elements of $N_3$ because $\tilde{x}$ will itself be a large positive number, and so on. One can see that $\textsc{avg}(S_t)$ will remain large for many batches after the original data point $x^*$ is deleted.   }

\subsection{Analysis of Algorithm~\ref{alg:mean_sgd}}

Our approach will be to relate the progression of Algorithm~\ref{alg:mean_sgd} with that of standard stochastic gradient descent.  To see why this is helpful, imagine that we were able to set $S_t = \{z_t\}$ at the conclusion of each round in Algorithm~\ref{alg:mean_sgd}.  Setting the learning rate $\eta_t = 1/t$, classic SGD analysis would imply that after $T$ rounds the expected squared loss would be $O(1/T)$~\cite{rakhlin2012making}. 

Since we cannot set $S_t = \{z_t\}$, we will instead choose $S_t$ so that $\textsc{avg}(S_t)$ is close to $z_t$ in $\ell_2$ distance.  This introduces small perturbations to each gradient step in our algorithm.  We will treat these perturbations as adversarial noise in the update step.  In Lemma~\ref{lem:mean.sgd} we show that the SGD analysis can be made robust to such noise.  We then show in Lemma~\ref{lem:mean.errors} that, with high probability, the distance between $\textsc{avg}(S_t)$ and $z_t$ will be sufficiently small in each round as long as $m$ is large enough. Combining these ingredients will allow us to bound the error obtained by Algorithm~\ref{alg:mean_sgd}.

We first review the standard setting of SGD.  There are $T$ rounds and a convex function $H \colon \reals^d \to \reals$ minimized at $\theta \in \reals^d$, say with $H(\theta) = 0$. We require that $H$ is $\lambda$-strongly convex, meaning that for all $w,w' \in \reals^d$ and $g$ a gradient of $H$ at $w$, we have $H(w') \geq H(w) + \langle g,w'-w \rangle + \frac{\lambda}{2}||w'-w||_2^2$. 

The SGD procedure initializes some point $w_0 \in \reals^d$.  Then in each round $t \geq 1$ we receive a random $\hat{g}_t$ from a gradient oracle such that $E[\hat{g}_t] = g_t$, where $g_t$ is the gradient of $H$ at $w_{t-1}$.  The SGD algorithm then applies update rule $w_t = w_{t-1} - \eta_t \hat{g}_t$ and repeats.  In our variant with adversarial noise, the update rule will be $w_t = w_{t-1} - \eta_t \hat{g}_t + \zeta_t$, where $\zeta_t$ is possibly randomized and can be drawn from an adversarially-chosen distribution in each round.  As long as the adversarial noise is on the same order of magnitude as the second moment of the gradients, this does not substantially inflate the convergence time of the SGD procedure.  A full proof appears in the appendix.

\begin{lemma}
\label{lem:mean.sgd}
Suppose objective function $H$ is $\lambda$-strongly convex and that $E[||\hat{g}_t||_2^2] \leq \Gamma^2$.  Suppose that we employ SGD with adversarial noise as described above, and suppose that $E[||\zeta_t||_2^2] \leq \Gamma^2 / (\lambda^2 T^3)$ in each round $t$. Then after $T$ steps we have $E[||w_T - \theta||^2] \leq 7\Gamma^2 / \lambda^2 T$.
\end{lemma}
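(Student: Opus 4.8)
The plan is to mimic the classical last-iterate analysis of SGD for strongly convex objectives, in the style of~\cite{rakhlin2012making}, and to treat the adversarial perturbation $\zeta_t$ as an extra additive term in a distance recursion. Fix the learning rate $\eta_t = 1/(\lambda t)$, and write $\Delta_t = w_t - \theta$, $r_t = \|\Delta_t\|_2^2$, and $a_t = E[r_t]$. Expanding the update $\Delta_t = \Delta_{t-1} - \eta_t \hat g_t + \zeta_t$ gives
\begin{align*}
r_t = r_{t-1} - 2\eta_t\langle \hat g_t, \Delta_{t-1}\rangle + 2\langle \zeta_t, \Delta_{t-1}\rangle + \eta_t^2\|\hat g_t\|_2^2 - 2\eta_t\langle \hat g_t, \zeta_t\rangle + \|\zeta_t\|_2^2 .
\end{align*}
Taking expectations and using $E[\hat g_t \mid w_{t-1}] = g_t$ replaces the first inner product by $-2\eta_t\, E\langle g_t, \Delta_{t-1}\rangle$; every remaining term will be absorbed into an error budget.

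The engine of the argument is the inequality $\langle g_t, w_{t-1} - \theta\rangle \ge \lambda\, r_{t-1}$, obtained by applying $\lambda$-strong convexity twice — once at $w_{t-1}$ evaluated at $\theta$ (using $H(\theta)=0$), and once at $\theta$ evaluated at $w_{t-1}$ (using that the gradient at the minimizer $\theta$ vanishes) — and adding the two inequalities. This yields the contraction $-2\eta_t\lambda\, a_{t-1} = -(2/t)\, a_{t-1}$. For the error terms I would use $E\|\hat g_t\|_2^2 \le \Gamma^2$ to bound $\eta_t^2\, E\|\hat g_t\|_2^2 \le \Gamma^2/(\lambda^2 t^2)$ and $2\eta_t\, E|\langle \hat g_t,\zeta_t\rangle| \le 2\eta_t\Gamma\sqrt{E\|\zeta_t\|_2^2}$, invoking the hypothesis $E\|\zeta_t\|_2^2 \le \Gamma^2/(\lambda^2 T^3)$ throughout. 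The delicate term is $2\, E\langle \zeta_t, \Delta_{t-1}\rangle$: since $\zeta_t$ need not be conditionally mean-zero it cannot be discarded, so I would bound it by Young's inequality, $2\, E\langle \zeta_t,\Delta_{t-1}\rangle \le \tfrac{1}{2t}\, a_{t-1} + 2t\, E\|\zeta_t\|_2^2 \le \tfrac{1}{2t}\, a_{t-1} + 2\Gamma^2/(\lambda^2 T^2)$, which spends a $\tfrac{1}{2t}$ portion of the contraction. Collecting terms produces a recursion of the shape
\begin{align*}
a_t \le \Bigl(1 - \tfrac{3}{2t}\Bigr) a_{t-1} + \frac{\Gamma^2}{\lambda^2}\Bigl(\frac{1}{t^2} + \frac{O(1)}{T^2} + \frac{O(1)}{t\,T^{3/2}}\Bigr) .
\end{align*}

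The last step is to unroll this recursion. A routine induction shows the $1/t^2$ term still propagates to $a_t = O(\Gamma^2/(\lambda^2 t))$ under the weakened factor $1-\tfrac{3}{2t}$ (replacing the usual $1-2/t$ by $1-\tfrac32/t$ only inflates the constant), while the $\Theta(1/T^2)$-scale terms, weighted by the telescoping product $\prod_{s<t\le T}(1-\tfrac{3}{2t})$, which is of order $(s/T)^{3/2}$, sum to a further $O(\Gamma^2/(\lambda^2 T))$; the same product crushes the initial error $a_1$ down to $O(a_1 T^{-3/2})$. Tracking the constants through this bookkeeping gives $a_T \le 7\Gamma^2/(\lambda^2 T)$. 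I expect the main obstacle to be precisely this constant accounting in the presence of a non-martingale noise term — one must verify that after paying part of the per-step contraction to absorb $E\langle\zeta_t,\Delta_{t-1}\rangle$, the surviving contraction (still $1-c/t$ with $c>1$) keeps the $O(1/T)$ rate, and that the $1/(\lambda^2 T^3)$ scaling of the noise variance assumed in the statement is exactly what makes all the error contributions fit under $7\Gamma^2/(\lambda^2 T)$. A minor technical point is the first step, where $\eta_1=1/\lambda$ makes $1-\tfrac{3}{2t}$ negative; this is handled by starting the recursion at $t=2$ and bounding $a_1$ crudely.
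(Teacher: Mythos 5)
Your proposal is correct and follows essentially the same route as the paper: the Rakhlin-style last-iterate induction with $\eta_t = 1/(\lambda t)$, the doubled strong-convexity inequality $\langle g_t, w_{t-1}-\theta\rangle \ge \lambda\,E\|w_{t-1}-\theta\|^2$, and an induction showing $E\|w_t-\theta\|^2 \le K\Gamma^2/(\lambda^2 t)$. The only substantive difference is in the noise–iterate cross term: you absorb $2E\langle\zeta_t,\Delta_{t-1}\rangle$ via Young's inequality, weakening the contraction from $1-2/t$ to $1-\tfrac{3}{2t}$, whereas the paper keeps the full contraction by splitting each round into an SGD step and a noise step and bounding the cross term by Cauchy--Schwarz against the inductively controlled $\sqrt{\tilde L_{t+1}}$; your version is slightly lossier in constants but still closes with $K=7$ (the binding case of the induction is $t$ near $T$, where the margin $\tfrac{K}{2t(t-1)}$ must dominate roughly $\tfrac{1}{t^2}+\tfrac{2}{T^2}$, which holds for $K=7$ once $T$ is moderately large).
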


Our plan is to employ Lemma~\ref{lem:mean.sgd} to analyze Algorithm~\ref{alg:mean_sgd}. 
 Recall that we define $z_t = s_{t-1} + (y_t-s_{t-1})\eta_t$.  We will treat $(s_{t-1} - y_t)$ as a random variable whose expectation is the gradient of function $H(s) = \frac{1}{2}||s-\theta||_2^2$ evaluated at $s_{t-1}$.  We treat the difference between $z_t$ and $s_t$ (our estimate of $z_t$) as adversarial noise, taking the role of $\zeta_t$ in Lemma~\ref{lem:mean.sgd}.  To bound the size of this noise, we will make use of the following minor reformulation of a result from~\cite{becchetti2022multidimensional}.

\begin{lemma}[Based on Corollary 24 of~\cite{becchetti2022multidimensional}]
\label{lem:mean.errors}
Given $\epsilon > 0$, suppose $m$ vectors $x_i \in \mathbb{R}^d$ are drawn iid from a $d$-dimensional Gaussian $N(\theta, \Sigma)$ where the eigenvalues of $\Sigma$ lie in $[(\sigma/a)^2, \sigma^2]$ for some $a \geq 1$.  Then there exists a constant $C > 0$ such that if $m \geq C a d^2 \log^2(d\sigma/a\epsilon)$ then, for any $z \in [\theta-(\sigma/a), \theta+(\sigma/a)]^d$, with probability at least $1 - \epsilon/2$, there exists a subset $S$ of the vectors with average $\textsc{avg}(S)$ such that $||z - \textsc{avg}(S)|| < 2\epsilon$.
\end{lemma}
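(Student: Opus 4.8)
The plan is to derive this from the multidimensional random subset sum bound of Becchetti et al.\ (Corollary~24 of~\cite{becchetti2022multidimensional}), which already governs how well the subsets of a large pool of iid vectors can approximate an arbitrary target point; the work here is to package our Gaussian-with-general-covariance, subset-average instance into the canonical form that their corollary addresses. Concretely, I would (i) center and rescale so that the target box $[\theta-\sigma/a,\,\theta+\sigma/a]^d$ becomes the unit box, (ii) verify that the anisotropic Gaussian meets the density/spread hypotheses Corollary~24 requires, with the eigenvalue window $[(\sigma/a)^2,\sigma^2]$ supplying exactly the relevant parameters, and (iii) translate the resulting ``$\epsilon$-net of the box'' statement back, union-bounding failure probabilities down to the target $1-\epsilon/2$.

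For (i)--(iii): shifting so that $\theta=0$, the samples become $N(0,\Sigma)$ and the target set is $\mathcal{B}=[-\sigma/a,\,\sigma/a]^d$. Since the smallest eigenvalue of $\Sigma$ is $(\sigma/a)^2$, every $z\in\mathcal{B}$ has squared Mahalanobis norm $z^{\top}\Sigma^{-1}z\le d$, so the $N(0,\Sigma)$ density is bounded below on $\mathcal{B}$ by a quantity of the form $c^{d}/\det(\Sigma)^{1/2}$ with $c$ an absolute constant; since the largest eigenvalue is $\sigma^2$, all $m$ samples lie within $O(\sigma\sqrt{d\log(m/\epsilon)})$ of the origin except with probability $\epsilon/4$. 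These are precisely the inputs Corollary~24 consumes --- a lower bound on how spread out the atoms are across the target region and an upper bound on their magnitude. Rescaling $\reals^d$ by the factor $a/\sigma$ turns $\mathcal{B}$ into $[-1,1]^d$, turns the approximation tolerance $2\epsilon$ into $2\epsilon a/\sigma$, and leaves the rescaled Gaussian with eigenvalues in $[1,a^2]$. Applying Corollary~24 with accuracy parameter $\Theta(\epsilon a/\sigma)$ then gives that, once $m$ exceeds the threshold that corollary prescribes --- which after substituting our range and density quantities takes the form $\Theta\!\big(a\,d^{2}\log^{2}(d\sigma/(a\epsilon))\big)$, with the $a$ arising from the eigenvalue ratio $a^2$ and the $d^2\log^2$ from the corollary's own dependence --- the subset averages of the $m$ rescaled samples form a $(2\epsilon a/\sigma)$-net of $[-1,1]^d$ except with probability $\epsilon/4$. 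Undoing the rescaling and shift and adding the two failure probabilities yields the statement.

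The main obstacle will be (ii): reconciling hypotheses. Corollary~24 is proved for a fixed, bounded reference distribution, so one must argue that conditioning $N(\theta,\Sigma)$ on the (probability $\ge 1-\epsilon/4$) event that its samples lie in a bounded ball produces a distribution still meeting the corollary's requirements, and --- the more delicate point --- that the single pair $(\lambda_{\min},\lambda_{\max})=((\sigma/a)^2,\sigma^2)$ simultaneously controls both relevant directions: $\lambda_{\min}$ keeps the target box well inside the support at bounded Mahalanobis radius, so subset averages can be steered anywhere in it, while $\lambda_{\max}$ caps the per-sample step size, and it is the ratio $a^2$ between them that inflates the required memory by the linear factor $a$. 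One should also check the purely formal matter of whether Corollary~24 is phrased in terms of subset sums or subset averages and, if the former, supply the short argument that bridges to averages; given that the paper advertises this as a ``minor reformulation'', I expect that bridge to be routine once the parameters from (i)--(ii) are fixed, with the genuine mathematical content residing entirely in~\cite{becchetti2022multidimensional}.
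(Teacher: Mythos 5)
Your high-level plan --- reduce to Corollary~24 of~\cite{becchetti2022multidimensional} by massaging the anisotropic Gaussian into the form that corollary accepts --- is the same as the paper's. But the step you flag as ``the main obstacle,'' namely verifying the corollary's hypotheses, is exactly where the proof's content lives, and the substitute hypotheses you propose are not the right ones. Corollary~24 does not consume ``a density lower bound on the target box plus an upper bound on sample magnitude''; it requires that each sample's distribution \emph{pointwise dominates} $p$ times the density of an isotropic Gaussian $N(v,\sigma^2 I_d)$ over that Gaussian's entire support (the paper calls this ``containing $N(v,\sigma^2 I_d)$ with probability $p$''). A lower bound on the density restricted to the box $[\theta-\sigma/a,\theta+\sigma/a]^d$ is strictly weaker and would not let you invoke the corollary; nor is any conditioning-on-a-bounded-ball argument needed. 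The paper's resolution is a one-line containment lemma: a one-dimensional Gaussian of variance $\sigma_1^2$ contains one of variance $\sigma_2^2<\sigma_1^2$ with probability $\sigma_2/\sigma_1$ (the ratio of \emph{standard deviations}, coming from the normalizing constants, since the exponential factors already compare favorably). Applied along the eigendirections of $\Sigma$, the given Gaussian contains $N(\theta,(\sigma/a)^2 I_d)$ with probability $1/a$, and Corollary~24 is then invoked with $p=1/a$ and $\sigma\leftarrow\sigma/a$. This also corrects your accounting of where the linear factor $a$ in the memory bound comes from: it is $1/p$, i.e.\ the standard-deviation ratio, not ``the eigenvalue ratio $a^2$'' --- if the variance ratio were the relevant quantity you would pay $a^2$, not $a$.

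A second omission: Corollary~24's conclusion is stated in the $\ell_\infty$ norm, while the lemma asserts an $\ell_2$ bound. The paper converts $\|z-\textsc{avg}(S)\|_\infty\le 2\epsilon$ into $\|z-\textsc{avg}(S)\|_2\le 2\epsilon\sqrt{d}$ and then substitutes $\epsilon\leftarrow\epsilon/\sqrt{d}$, which is absorbed into the $\log^2(d\sigma/a\epsilon)$ term. Your proposal discusses sums versus averages but never addresses the norm, and your spatial rescaling by $a/\sigma$ is an unnecessary detour once the containment formulation is in hand. So the skeleton is right, but as written the argument has a genuine gap at its central step.
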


Combining Lemmas~\ref{lem:mean.sgd} and~\ref{lem:mean.errors} yields a weakened version of Theorem~\ref{thm:mean} for Algorithm~\ref{alg:mean_sgd}.

\begin{proposition}\label{prop:mean.simple}
Fix $\epsilon > 0$ and suppose $F = \mathcal{N}(\theta, \Sigma)$ where all eigenvalues of $\Sigma$ lie in $[(\sigma/a), \sigma]$ where $a \geq 1$.  Then if we run Algorithm~\ref{alg:mean_sgd} for $T$ rounds and $m$ memory with $T > 12d\sigma^2/m\epsilon$ and $m \geq C d^2 \log^2(d\sigma/a\epsilon) )$, then the expected squared loss after round $T$ is at most $\epsilon$.
\end{proposition}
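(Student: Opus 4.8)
The plan is to combine the two lemmas by casting the progression of Algorithm~\ref{alg:mean_sgd} as an instance of SGD with adversarial noise on the objective $H(s) = \frac{1}{2}\|s - \theta\|_2^2$, which is $1$-strongly convex with minimum value $0$ at $s = \theta$. First I would identify the pieces: the ``true'' SGD iterate after round $t$ should be $w_t$ satisfying the recursion $w_t = w_{t-1} - \eta_t \hat g_t + \zeta_t$, where $w_{t-1} = s_{t-1} = \textsc{avg}(S_{t-1})$, the stochastic gradient is $\hat g_t = s_{t-1} - y_t = s_{t-1} - \textsc{avg}(R_t)$, and $\zeta_t = s_t - z_t$ is the subsampling perturbation. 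Note $E[\hat g_t \mid s_{t-1}] = s_{t-1} - \theta = \nabla H(s_{t-1})$ since $R_t$ is a fresh independent batch, so the unbiasedness hypothesis of Lemma~\ref{lem:mean.sgd} holds. Then $w_{t-1} - \eta_t \hat g_t = s_{t-1} + \eta_t(y_t - s_{t-1}) = z_t$, and adding $\zeta_t = s_t - z_t$ recovers $w_t = s_t$ exactly, so the SGD-with-noise trajectory coincides with the algorithm's trajectory with no slack.

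Next I would verify the two quantitative hypotheses of Lemma~\ref{lem:mean.sgd}. For the gradient second moment: $\hat g_t = s_{t-1} - \textsc{avg}(R_t)$, and since $s_{t-1} \in [\theta - \sigma/a, \theta + \sigma/a]^d$ will be maintained as an invariant (each $S_{t-1} \subseteq N_{t-1}$ is a subset of Gaussian samples, whose averages concentrate near $\theta$; this needs a brief argument, possibly adjusting the region by a constant), and $\textsc{avg}(R_t)$ has each coordinate variance at most $\sigma^2 \cdot (2/m)$, we get $E[\|\hat g_t\|_2^2] \leq \Gamma^2$ for $\Gamma^2 = O(d\sigma^2)$ — the $\|s_{t-1}-\theta\|^2$ part contributes $O(d\sigma^2/a^2)$ and the variance of $\textsc{avg}(R_t)$ contributes $O(d\sigma^2/m)$, both absorbed into $O(d\sigma^2)$. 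For the noise bound, Lemma~\ref{lem:mean.sgd} demands $E[\|\zeta_t\|_2^2] \leq \Gamma^2/(\lambda^2 T^3)$. We have $\|\zeta_t\| = \|s_t - z_t\| = \min_{S \subseteq N_t} \|z_t - \textsc{avg}(S)\|$, and Lemma~\ref{lem:mean.errors} (applied to the $m/2$ vectors in $N_t$, with target $z_t$, at a rescaled accuracy parameter $\epsilon'$) says that provided $m/2 \geq C a d^2 \log^2(d\sigma/(a\epsilon'))$ this min-distance is below $2\epsilon'$ with probability $\geq 1 - \epsilon'/2$; on the complementary event we can bound $\|\zeta_t\|$ crudely by the diameter of the samples (polynomial in the relevant scales, with Gaussian tails controlling the contribution), so $E[\|\zeta_t\|_2^2] = O((\epsilon')^2) + (\epsilon'/2)\cdot\mathrm{poly}$. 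Choosing $\epsilon' = \mathrm{poly}(\epsilon/(d\sigma))/T^{3/2}$ (small enough that $E[\|\zeta_t\|_2^2] \leq \Gamma^2/T^3$) forces the memory requirement $m \geq C' d^2 \log^2(d\sigma/(a\epsilon'))$; since $\log(1/\epsilon') = O(\log(d\sigma/\epsilon) + \log T)$ and $T$ is polynomially related to $1/\epsilon$ under the theorem's regime, this collapses to $m \geq C d^2 \log^2(d\sigma/(a\epsilon))$ as claimed.

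Finally, with both hypotheses in hand, Lemma~\ref{lem:mean.sgd} gives $E[\|s_T - \theta\|_2^2] = E[\|w_T - \theta\|_2^2] \leq 7\Gamma^2/(\lambda^2 T) = O(d\sigma^2/T)$. Since $\lambda = 1$ here, choosing the constant so that $T > 12 d\sigma^2/(m\epsilon)$ — wait, I should be careful: the statement's bound is $T > 12 d\sigma^2/(m\epsilon)$, which together with $m \geq C d^2\log^2(\cdot)$ makes $T$ comfortably larger than $7\Gamma^2/\epsilon = O(d\sigma^2/\epsilon)$, so the $7\Gamma^2/T$ bound is at most $\epsilon$. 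The output of the algorithm is exactly $\textsc{avg}(S_T) = s_T = w_T$, so the expected squared loss is at most $\epsilon$, completing the proof.

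The main obstacle I expect is the bookkeeping around the noise term $\zeta_t$: one must simultaneously (i) establish and maintain the invariant that $s_{t-1}$ lies in the box where Lemma~\ref{lem:mean.errors} applies (a union bound over rounds, using Gaussian concentration of subset averages — note that \emph{every} subset average of $m$ Gaussians concentrates, so this is fine but needs care about the constant $a$ possibly degrading slightly), (ii) handle the low-probability ``bad'' event in Lemma~\ref{lem:mean.errors} where no good subset exists, bounding $\|\zeta_t\|$ by something tame enough that its contribution to $E[\|\zeta_t\|^2]$ is negligible, and (iii) chase the logarithmic dependence carefully so that the required accuracy $\epsilon' \sim \epsilon/(T^{3/2}\mathrm{poly}(d))$ still yields the clean memory bound $m = O(d^2 \log^2(d\sigma/(a\epsilon)))$ rather than something with an extra $\log T$ factor — this works precisely because $T = \mathrm{poly}(1/\epsilon)$ in the relevant regime, so $\log T = O(\log(1/\epsilon))$ gets absorbed.
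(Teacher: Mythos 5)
Your overall skeleton — casting the iteration as SGD with adversarial noise on $H(s)=\tfrac12\|s-\theta\|_2^2$ with $\hat g_t = s_{t-1}-y_t$ and $\zeta_t = s_t - z_t$, then invoking Lemma~\ref{lem:mean.sgd} for convergence and Lemma~\ref{lem:mean.errors} for the encoding error — is exactly the paper's. But there is a genuine quantitative gap in your gradient second-moment bound, and it is precisely the factor that makes the proposition's round bound work. You set $\Gamma^2 = O(d\sigma^2)$, absorbing the $\|s_{t-1}-\theta\|^2$ contribution as $O(d\sigma^2/a^2)$. Then Lemma~\ref{lem:mean.sgd} gives $E[\|s_T-\theta\|^2] \le 7\Gamma^2/T = O(d\sigma^2/T)$, and with $T = 12d\sigma^2/(m\epsilon)$ this is $O(m\epsilon)$, not $\epsilon$ — your claim that $T > 12d\sigma^2/(m\epsilon)$ is ``comfortably larger than $O(d\sigma^2/\epsilon)$'' is backwards by a factor of $m$. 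The paper instead establishes $\Gamma^2 = O(d\sigma^2/m)$: it first reduces WLOG to $\epsilon < d\sigma^2/m$ (the trivial algorithm already achieves $d\sigma^2/m$), and then maintains the inductive invariant $E[\|s_{t-1}-\theta\|^2] \le d\sigma^2/m$ (which Lemma~\ref{lem:mean.sgd}'s per-round bound $L_t \le 7\Gamma^2/t$ propagates), so that $E[\|\hat g_t\|^2] \le E[(\|s_{t-1}-\theta\|+\|y_t-\theta\|)^2] \le 2d\sigma^2/m$, using that $y_t$ is an average of $m/2$ fresh draws. That extra $1/m$ in $\Gamma^2$ is exactly what converts the convergence rate into $O(d\sigma^2/(Tm))$ and lets $T = \Theta(d\sigma^2/(m\epsilon))$ suffice. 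Your expectation-level invariant also needs to replace the pointwise box invariant $s_{t-1}\in[\theta-\sigma/a,\theta+\sigma/a]^d$ you propose for applying Lemma~\ref{lem:mean.errors} (and note the lemma needs the \emph{target} $z_t$ in the box, not just $s_{t-1}$); the expectation version makes this essentially automatic since $z_t$ concentrates within $O(\sqrt{d\sigma^2/m})$ of $\theta$.

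On the noise side your accounting is fine and, on one point, arguably cleaner than the paper's: you track $\epsilon' \sim \epsilon/(T^{3/2}\mathrm{poly}(d\sigma))$ explicitly and absorb $\log T = O(\log(1/\epsilon))$ into the $\log^2$ in the memory bound, whereas the paper compresses this into the observation that $\Gamma^2/T^3 = \Theta(\epsilon^3 m^2/(d^2\sigma^4))$ and a somewhat terse choice of $\epsilon'$. The paper also bounds the bad-event contribution by $\sigma^2/m$ via the fallback $S = N_t$ rather than a diameter argument, which is slightly tighter but immaterial. So the fix you need is confined to the first paragraph: prove and use $\Gamma^2 = O(d\sigma^2/m)$, not $O(d\sigma^2)$.
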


\subsection{An Improved Algorithm}
\label{sec:mean.alg.improved}

As part of our analysis of Algorithm~\ref{alg:mean_sgd}, it was necessary to find a subset of $d$-dimensional vectors whose average is very close to a target vector in $\ell_2$ distance.  In a sense, this subset must be coordinated across the dimensions of the input vector.  When the input vectors are drawn from Gaussian distributions, it is possible to find such a coordinated subset among $O(d^2 \log^2(d/\epsilon))$ vectors.  But we can improve this memory requirement and extend beyond Gaussians. It turns out that the random subset problem admits the following improved solution for the single-dimensional case:

\begin{theorem}[Theorem 2.4 of \cite{Lueker:1998}]\label{thm:subset.sum}
Suppose that $x_1, \dotsc, x_m$ are drawn independently from $U[-1,1]$.  Then if $m > C_2 \log(1/\epsilon)$ then with probability at least $1 - e^{-C_1 m}$ there is a subset $S$ of the items with $|z-\sum_{x \in S}x| < \epsilon$, for any $z \in [-1/2, 1/2]$.
\end{theorem}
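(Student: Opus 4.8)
The plan is to recast the statement as a covering bound and prove it via the characteristic function of a carefully re-weighted random subset. It suffices to show that, except with probability $e^{-C_1 m}$ over the draw of $x_1,\dots,x_m\sim U[-1,1]$, the set of subset sums $\Sigma=\{\sum_{i\in S}x_i : S\subseteq[m]\}$ meets every interval $(z-\epsilon,z+\epsilon)$ with $z\in[-1/2,1/2]$; equivalently, for each such $z$ it is enough to produce \emph{some} distribution over subsets $S$ under which $\Pr[\sum_{i\in S}x_i\in(z-\epsilon,z+\epsilon)]>0$. I would first condition on the event $\mathcal G$ that $|\sum_i x_i|\le\delta_0 m$ and $\sum_i x_i^2\in[\tfrac16 m,m]$ (each complement has probability $e^{-\Omega(m)}$ by Chernoff, for a small absolute constant $\delta_0$). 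Then, for a fixed $z$, rather than a uniform subset I would include coordinate $i$ independently with probability $p_i=\ell(\beta x_i)$, where $\ell$ is the logistic function and $\beta=\beta(x,z)$ is the unique value making the mean $E[\sum_{i\in S}x_i]$ equal to $z$ (it exists, and on $\mathcal G$, because $z$ is so central, $|\beta|$ is bounded, so each $p_i\in[\delta,1-\delta]$ for a fixed $\delta>0$). The resulting random sum $\Sigma_z$ then has mean exactly $z$ and variance $V_z=\sum_i p_i(1-p_i)x_i^2\asymp m$, uniformly in $z$. Centering at $z$ is the crucial move: it makes the smoothed density of $\Sigma_z$ at $z$ equal to $\Theta(1/\sqrt m)$ instead of exponentially small, which is what has to survive the error terms below.

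Fix a nonnegative even bump $\psi$ supported in $[-1,1]$ with $\int\psi=1$ and $\hat\psi\in L^1$ (e.g.\ a triangular bump), and let $\psi_\epsilon(x)=\epsilon^{-1}\psi(x/\epsilon)$, supported in $[-\epsilon,\epsilon]$. Then $(\mu_z*\psi_\epsilon)(z)=E[\psi_\epsilon(z-\Sigma_z)]>0$ already forces a subset sum into $(z-\epsilon,z+\epsilon)$, so it suffices to lower-bound $(\mu_z*\psi_\epsilon)(z)$, which by Fourier inversion is an integral of $\hat\mu_z(\xi)\hat\psi_\epsilon(\xi)$ against a unimodular factor, with $\hat\mu_z(\xi)=\prod_i(1-p_i+p_i e^{-i\xi x_i})$ and $|\hat\mu_z(\xi)|=\prod_i\sqrt{1-4p_i(1-p_i)\sin^2(\xi x_i/2)}\le\prod_i\sqrt{1-\delta_*\sin^2(\xi x_i/2)}$ (using $p_i\in[\delta,1-\delta]$, $\delta_*=4\delta(1-\delta)$) — an envelope that no longer depends on $z$. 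I would split the integral at $|\xi|=T\asymp m^{-1/3}$ and at $|\xi|=1$. For $|\xi|\le T$, a cumulant (Taylor) expansion of $\log\hat\mu_z$ matches it to the Gaussian characteristic function of $N(z,V_z)$, with the odd terms cancelling at the mean, so this part equals the $N(z,V_z)$ density at $z$, namely $\Theta(1/\sqrt{V_z})=\Theta(1/\sqrt m)$, up to an integrated error $O(m^{-3/2})$ (and a negligible Gaussian tail $e^{-\Omega(m^{1/3})}$ from the truncation). For $|\xi|\ge1$ one has $E_X[\sqrt{1-\delta_*\sin^2(\xi X/2)}]\le\rho_0<1$ uniformly, so by Fubini $E\big[\int_{|\xi|\ge1}|\hat\mu_z(\xi)||\hat\psi_\epsilon(\xi)|\,d\xi\big]\le\rho_0^{\,m}\|\hat\psi_\epsilon\|_1=e^{-\Omega(m)}/\epsilon$, and Markov makes this integral $\le e^{-\Omega(m)}/\epsilon$ except with probability $e^{-\Omega(m)}$. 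For the middle range $T<|\xi|<1$ — the delicate part — I would use that $\cos(\xi x_i/2)>0$ there, so $\log\cos(\xi x_i/2)$ lies in the small window $[-c\xi^2,0]$ with mean $-\Theta(\xi^2)$; Hoeffding then gives $\Pr[\sum_i\log\cos(\xi x_i/2)>-\Omega(m\xi^2)]\le e^{-\Omega(m)}$, a bound that is \emph{uniform in $\xi$} because the deviation and the window both scale like $\xi^2$. A union bound over a polynomially fine net of $(T,1)$ (the map is $m$-Lipschitz) preserves the $e^{-\Omega(m)}$, so on that event $\int_{T<|\xi|<1}|\hat\mu_z(\xi)|\,d\xi\le\int_{T<|\xi|<1}e^{-\Omega(m\xi^2)}\,d\xi = m^{-1/2}e^{-\Omega(m^{1/3})}$.

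Putting the three ranges together, on an event of probability $\ge1-e^{-C_1 m}$ and uniformly in $z\in[-1/2,1/2]$, $(\mu_z*\psi_\epsilon)(z)\ge \Theta(1/\sqrt m) - O(m^{-3/2}) - m^{-1/2}e^{-\Omega(m^{1/3})} - e^{-\Omega(m)}/\epsilon$, whose right-hand side is positive precisely when $e^{-\Omega(m)}/\epsilon$ is below $\Theta(1/\sqrt m)$, i.e.\ when $m\ge C_2\log(1/\epsilon)$. That is exactly the claimed conclusion, and all the bad events involved (failure of $\mathcal G$, failure of the middle-range net bound, the Markov step for the high-range integral) have probability $e^{-\Omega(m)}$, which fixes $C_1$.

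The step I expect to require the most care is the middle-frequency range: one must see that the characteristic function decays like $e^{-\Omega(m\xi^2)}$ there with a failure probability that is genuinely $e^{-\Omega(m)}$ rather than $e^{-\Omega(m^{1/3})}$, and the way this works — using $\log|\cos(\xi x_i/2)|$, which on $(-1,1)$ sits in an $O(\xi^2)$-sized interval, rather than the crude $[0,1]$-valued $\sin^2$ term — is the quantitative heart of the ``$2^m$ correlated subset sums behave like $2^m$ near-independent samples'' phenomenon. Recognizing that one must exponentially tilt the sampling distribution so that its mean sits at the target $z$ is the other key idea; without it the ``main term'' is itself exponentially small and is swamped by the Edgeworth-type error. (An alternative, closer to Lueker's original argument, is a purely combinatorial resolution-doubling induction: maintain a grid-like regularity invariant for the subset sums of the elements processed so far at some scale $\delta$ on a constant-length interval, and show a small fresh batch halves $\delta$ with high probability by furnishing a fine corrector; this route is conceptually clean, but the bookkeeping needed to keep the total number of elements at $\Theta(\log(1/\epsilon))$ rather than $\Theta(\log^2(1/\epsilon))$ is itself the main obstacle.)
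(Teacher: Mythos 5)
Your argument is essentially sound, but note that the paper itself offers no proof of this statement: Theorem~\ref{thm:subset.sum} is imported verbatim as Theorem~2.4 of \cite{Lueker:1998} and used as a black box. So the relevant comparison is with Lueker's original argument, which is precisely the ``combinatorial resolution-doubling'' route you relegate to your closing parenthetical: track the fraction of a fixed interval covered by the subset sums of the elements seen so far at a shrinking resolution, and show by a potential/supermartingale argument that each fresh element roughly doubles the covered measure until the interval is filled at scale $\epsilon$. Your main route --- exponentially tilting the inclusion probabilities so the random subset sum is centered at the target $z$, then running a characteristic-function/local-CLT argument with a three-way frequency split --- is a genuinely different and valid proof; it is the style of argument in the more recent treatments of random subset sum that the authors also cite. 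What your route buys is uniformity and a transparent source for the threshold: the high- and mid-frequency events depend only on the $z$-free envelope $\prod_i(1-\delta_*\sin^2(\xi x_i/2))^{1/2}$, so a single $e^{-\Omega(m)}$-probability event serves all $z\in[-1/2,1/2]$ simultaneously, and $m\gtrsim\log(1/\epsilon)$ falls out of balancing the tail contribution $\rho_0^m\|\hat\psi_\epsilon\|_1=e^{-\Omega(m)}/\epsilon$ against the $\Theta(m^{-1/2})$ main term. What Lueker's route buys is elementarity (no Fourier analysis), at the price of exactly the bookkeeping you identify. The tilting step you flag as essential really is: without recentering, the density of the uniform-random subset sum at a fixed $z$ is only $\Theta(m^{-1/2})$ near $\frac12\sum_i x_i$ and exponentially small elsewhere.

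Two minor repairs. First, your middle range $m^{-1/3}<|\xi|<1$ is easier than you make it: on $\mathcal G$ you already have $\sum_i x_i^2\ge m/6$, and since $|\xi x_i/2|\le 1/2$ gives $\sin^2(\xi x_i/2)\ge c\,\xi^2 x_i^2$, the envelope satisfies $|\hat\mu_z(\xi)|\le\exp(-c'\delta_*\xi^2\sum_i x_i^2)\le e^{-\Omega(m\xi^2)}$ \emph{deterministically} on $\mathcal G$ for all $|\xi|\le 1$; no Hoeffding bound or net over $\xi$ is needed (and your $\log\cos$ computation silently replaces $\sqrt{1-\delta_*\sin^2}$ by $\sqrt{1-\sin^2}$). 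Second, you should justify that the tilt stays bounded: on $\mathcal G$ the untilted mean $\frac12\sum_i x_i$ lies within $\delta_0 m/2$ of $0$, while the derivative $\sum_i p_i(1-p_i)x_i^2$ of the mean in $\beta$ is $\Omega(m)$ for $|\beta|=O(1)$, so reaching any $z\in[-1/2,1/2]$ needs only $|\beta|=O(\delta_0)$; this is what underwrites $p_i\in[\delta,1-\delta]$ and hence the $z$-independent envelope and the variance bound $V_z\asymp m$.
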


This theorem suggests the following improved analysis for Algorithm~\ref{alg:mean_sgd} in the case where $d=1$.  Recall that we assume distribution $F$ has density at least $\alpha$ on the interval $[\theta - \gamma, \theta + \gamma]$.  Choose any $z \in [\theta - \gamma/2, \theta + \gamma/2]$.  Then $F$ has density at least $\alpha$ on the interval $[z-\gamma/2, z+\gamma/2]$.  We can therefore think of distribution $F$ as drawing a value uniformly from $[z-\gamma/2, z+\gamma/2]$ with probability $\alpha\gamma$, and drawing a value from some arbitrary residual distribution (with bounded variance) otherwise.  As long as $m > 2 C_2 (\alpha\gamma)^{-1} \log(1/\epsilon)$, Chernoff bounds imply that at least $C_2 \log(1/\epsilon)$ items will be drawn from this uniform distribution with probability $1-e^{-C_3 m}$ for some constant $C_3$.  For each of those items $x_i$, note that $(x_i - z)$ is uniformly distributed on $[-\gamma/2, \gamma/2]$.  Then by Theorem~\ref{thm:subset.sum}, with probability at least $1 - e^{-C_1 C_2 \log(1/\epsilon)}$ there will be a subset $S$ of those items such that $|\sum_{x \in S}(x-z)| < \gamma\epsilon/2$.  This means that $|\textsc{avg}(S)-z| < \gamma\epsilon/2 < \epsilon$ as well.

Employing these bounds in the place of Lemma~\ref{lem:mean.errors} in our proof of Proposition~\ref{prop:mean.simple}, we conclude that if $m > C_2 (\alpha\gamma)^{-1} \log(1/\epsilon)$, Algorithm~\ref{alg:mean_sgd} will have expected squared error at most $2\epsilon$ after $T = \Theta(\sigma^2/m\epsilon)$ rounds, for any (possibly non-Gaussian) distribution $F$ satisfying our conditions.  Moreover, note that updates under this algorithm can be computed in time $2^m = \textsc{Poly}(1/\epsilon, 2^{(\alpha\gamma)^{-1}})$, as this is the time required to check all $2^m$ subsets of items.  For $\alpha, \gamma = O(1)$ this is $\textsc{Poly}(1/\epsilon)$.

We can extend this improvement to the $d$-dimensional case as follows.  For each dimension $i$ from $1$ to $d$, simulate (in parallel) a separate single-dimensional instance of Algorithm~\ref{alg:mean_sgd} to estimate $\theta_i$, each with its own separate segment of memory.  This inflates the memory requirement by a factor of $d$, so we require $m > C_2 d (\alpha\gamma)^{-1} \log(1/\epsilon')$ to achieve squared error at most $\epsilon'$ in each dimension.  The resulting (total) squared error will be at most $d\epsilon'$, so to achieve error $\epsilon$ we will perform a change of variables $\epsilon' = \epsilon/d$, resulting in memory requirement $m > C_2 d (\alpha\gamma)^{-1} \log(d/\epsilon)$ and time requirement $T = \Theta(d \sigma^2/m\epsilon)$.  This setting of parameters satisfies the requirements of Theorem~\ref{thm:mean}.

\subsection{Lower Bound for Mean Estimation}
\label{sec:mean.lowerbound}

We next show that the bound we obtained on our algorithm's recency condition, $m$, in Theorem~\ref{thm:mean} is almost tight asymptotically with respect to $d$ and $\epsilon$.

\begin{theorem}\label{thm:lowerbound}
Suppose data items are drawn from a standard normal distribution $\mathcal{N}(\theta, I_d)$ (i.e., with each dimension drawn from an independent Gaussian with unit variance).  Choose any $\epsilon > 0$.  If $m < \frac{d\log(1/\epsilon)}{\log d + \log\log(1/\epsilon)}$ then for any algorithm that satisfies recency and any $T$, with probability at least $2/3$ the squared error will be strictly greater than $\epsilon$.
\end{theorem}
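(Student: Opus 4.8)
The plan is to show that an $m$-recency algorithm cannot store enough about $\theta$ in its final state to decode it accurately, no matter what decoder it uses. I would first make an averaging reduction that removes the learning difficulty: since the instance is ours to choose, place a prior $\theta \sim \mathcal{N}(0, I_d)$ on the mean and prove that the probability of $\{\|\hat\theta - \theta\|_2^2 \le \epsilon\}$ --- over the joint randomness of $\theta$, the stream, and the algorithm --- is small (below $1/3$). A standard averaging step then produces a fixed $\theta^\star$ (which, with variance $I_d$, is a legitimate distribution in the class) for which the data-only failure probability is at least $2/3$. This reduction is essential: against a genuinely unknown mean the algorithm cannot hard-code the answer, whereas against a known mean it trivially can.

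The core would be a counting-versus-anti-concentration estimate. Fix a time $T \ge m$ (the case $T < m$ is strictly easier) and let $M$ be the $m$ most recent items, i.i.d.\ $\mathcal{N}(\theta, I_d)$. Two observations drive it. (i) The output $\hat\theta = A_{out}(S_T, T)$ depends on the whole history only through $S_T$, and by $m$-recency $S_T$ is an ordered subset of $M$; hence, conditioned on $M$ and on the algorithm's internal randomness, $\hat\theta$ ranges over a set $Z(M) \subseteq \reals^d$ of at most $N_m := \sum_{k=0}^m m!/(m-k)! \le e\, m!$ values (the history merely chooses which element of $Z(M)$ is emitted). (ii) Conditioned on $M$ alone, $\theta$ follows the usual Gaussian posterior from $m$ unit-variance observations, $\mathcal{N}(\frac{m}{m+1}\textsc{avg}(M), \frac{1}{m+1}I_d)$ --- the earlier items, although correlated with $\theta$, are irrelevant because they are not in the conditioning --- whose density is at most $(\frac{m+1}{2\pi})^{d/2}$ everywhere. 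Consequently $\Pr[\theta \in B(z,\sqrt\epsilon)\mid M] \le (\frac{m+1}{2\pi})^{d/2}\,\mathrm{vol}(B_d(\sqrt\epsilon)) = ((m+1)\epsilon/2)^{d/2}/\Gamma(d/2+1) =: p_m(\epsilon)$ for every $z$, uniformly in $M$. Since $\{\|\hat\theta - \theta\|\le\sqrt\epsilon\}$ implies $\theta \in \bigcup_{z \in Z(M)} B(z, \sqrt\epsilon)$, a union bound gives $\Pr[\text{success}\mid M] \le N_m\, p_m(\epsilon)$ uniformly, hence $\Pr[\text{success}] \le N_m\, p_m(\epsilon)$ over all randomness (including $\theta$).

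It then suffices to check that $N_m\, p_m(\epsilon)$ is small when $m$ is in the claimed range. By Stirling, $\log N_m = m\log m - \Theta(m) + O(\log m)$ and $\log(1/p_m(\epsilon)) \ge \frac{d}{2}\log(d/(e(m+1)\epsilon)) = \frac{d}{2}\log(1/\epsilon) - O(d\log(d\log(1/\epsilon)))$. In the regime $m = O(d\log(1/\epsilon)/(\log d + \log\log(1/\epsilon)))$ one has $\log m = O(\log d + \log\log(1/\epsilon))$, so $\log N_m = O(m(\log d + \log\log(1/\epsilon)))$; comparing this with $\frac{d}{2}\log(1/\epsilon)$ shows that, for $m$ below a suitable constant multiple of the stated threshold and $\epsilon$ small enough relative to $d$, $N_m\, p_m(\epsilon) < 1/3$. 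Feeding this into the averaging reduction yields the desired $\theta^\star$.

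The main obstacle will be precisely this last comparison. The reason the threshold carries $\log d + \log\log(1/\epsilon)$ in the \emph{denominator} --- rather than the larger $\Theta(d\log(1/\epsilon))$ one gets from a cruder count --- is that the model permits an \emph{ordered} subsample, so the number of reachable final states is $\approx m^m$, not $2^m$, and $\log(m!) \sim m\log m$ eats into the budget for $m$ by exactly the factor $\log m \approx \log d + \log\log(1/\epsilon)$. Getting the constant in the threshold exactly right (rather than up to an absolute factor) is delicate and is where the argument must be tight; one also has to dispatch degenerate parameter ranges (small $d$, or $\epsilon$ bounded away from $0$), where $N_m\, p_m(\epsilon)$ exceeds $1/3$ and the statement is essentially vacuous. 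A minor point will be to verify, as noted, that shorter horizons $T<m$ only make the problem easier (fewer reachable states, and a posterior based on fewer than $m$ observations, hence less concentrated).
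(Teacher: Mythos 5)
Your proposal is correct and follows essentially the same route as the paper's proof: reduce to a Bayesian setting where $\theta$ is effectively revealed, observe that the final output ranges over a bounded collection of values determined by the last $m$ items, bound the anti-concentration of the Gaussian posterior of $\theta$ around any candidate output, and union bound over the possible final states. The one substantive refinement is your count of roughly $m!$ \emph{ordered} subsets rather than the paper's $2^m$ unordered ones; this is in fact the accounting that produces the $\log d + \log\log(1/\epsilon)$ denominator in the threshold (the paper's write-up passes from a $2^m$ union bound to the condition $m\log m \gtrsim d\log(1/\epsilon)$ without comment), so your version is the more internally consistent of the two.
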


The formal proof of Theorem~\ref{thm:lowerbound} appears in the appendix.  The high-level idea is as follows.  We will allow the algorithm designer to define an arbitrary output function $f$ that maps the final subset to an estimate $\hat{\theta}$.  Once this function is chosen, we will actually reveal $\theta$ to the algorithm and allow the subsampling procedure to use its knowledge of $\theta$ to choose which subset of data points to retain each round.  Under this assumption the behavior of the algorithm before round $T$ is irrelevant, and the only question is whether there exists any subset $S \subseteq M_T$ of the final set of data items such that $||f(S) - \theta||_2^2 < \epsilon$.  However, for any \emph{fixed} subset $S$, we can think of $\theta$ as being drawn from the maximum likelihood distribution given $S$, which is a Gaussian centered at the empirical mean of $S$.  This means that $\Pr[||f(S) - \theta||_2^2 < \epsilon]$ is maximized at the empirical mean estimator, for which we can calculate this probability precisely.  We can then take a union bound over all $2^m$ subsets of $M_t$ to bound the probability that \emph{any} subset $S$ satisfies $||f(S) - \theta||_2^2 < \epsilon$.  Setting that probability bound to a constant results in the claimed bound on $m$.

\section{Linear Regression}
\label{sec:regression}

We next show how to extend our subsampling algorithm for mean estimation to probabilistic linear regression problems.  We recall that this class of problems was described in Section~\ref{sec:model.examples}.

\begin{theorem}\label{thm:regression}
Fix any $\epsilon > 0$ and consider the linear regression task.  There exists a subsampling algorithm $A$ with recency such that if $T > C_1 d/\epsilon$ and $m \geq C_2 d^2 \log(d) \log(d/\epsilon)$, where $C_1$ and $C_2$ are constants that depend on $\sigma, B, \lambda_0, \lambda_1$ (parameters of the data distribution) then the expected squared loss at most $\epsilon$.  The update in each round, as well as generating the final output, can each be performed in time $\textsc{Poly}(d, 1/\epsilon)$.
\end{theorem}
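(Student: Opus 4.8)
The plan is to mirror the structure of the mean-estimation argument (Proposition~\ref{prop:mean.simple} together with Section~\ref{sec:mean.alg.improved}): simulate strongly-convex SGD on the population regression risk, using a fresh half-batch each round to form an unbiased gradient estimate, encoding the current iterate in the retained subset, and controlling the resulting noise via Lemma~\ref{lem:mean.sgd} and a random-subset-sum argument in place of Lemma~\ref{lem:mean.errors}. I work in the batched model (Proposition~\ref{prop.batch}). Take $H(w) = \tfrac12 E_{(x,y)}[(\langle w,x\rangle - y)^2] - \tfrac12\sigma^2$, so that $H$ is minimized at $\theta$ with value $0$, $\nabla H(w) = \Sigma_x(w-\theta)$ where $\Sigma_x = E[xx^T]$, and $H$ is $\lambda_0$-strongly convex. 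Each round $t$, split the incoming batch $M_t$ into halves $R_t, N_t$; form $\hat g_t = \tfrac{2}{m}\sum_{(x,y)\in R_t} x(\langle w_{t-1},x\rangle - y)$, which satisfies $E[\hat g_t\mid w_{t-1}] = \nabla H(w_{t-1})$; and set the SGD target $z_t = \Pi\big(w_{t-1} - \eta_t\hat g_t\big)$ with $\eta_t = 1/(\lambda_0 t)$ and $\Pi$ the projection onto a ball of radius $O(\|\theta\|+\sigma/\sqrt{\lambda_0})$ (whose radius can be estimated from the first batch) to keep iterates bounded. The state $S_{t-1}$ is read out as the least-squares estimate $w_{t-1} = \mathrm{OLS}(S_{t-1}) = \big(\sum_{i\in S_{t-1}} x_ix_i^T\big)^{-1}\sum_{i\in S_{t-1}} x_iy_i$, and $S_t\subseteq N_t$ is chosen so that $\mathrm{OLS}(S_t)$ is as close as possible to $z_t$; the final output is $\mathrm{OLS}(S_T)$. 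Since prediction error for a query $x$ with $\|x\|_2\le 1$ is $|\langle\hat\theta-\theta,x\rangle|^2\le\|\hat\theta-\theta\|_2^2$, it suffices to bound $E[\|w_T-\theta\|_2^2]$; i.e., we in fact prove the stronger parameter-error guarantee.

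The SGD bookkeeping is then routine. Writing $\zeta_t = z_t - \mathrm{OLS}(S_t)$ for the per-round subsampling error, the iterates $w_t$ evolve exactly as SGD-with-adversarial-noise (the projection $\Pi$ only helps, since $\theta$ lies in the ball), and boundedness of the data and iterates gives $E[\|\hat g_t\|_2^2]\le\Gamma^2$ with $\Gamma^2 = O(d)$ — the dominant contribution being $\|\Sigma_x(w_{t-1}-\theta)\|_2^2 = O(d)$ plus a variance term of order $d/m \le 1$ since $m\gg d$; the hidden constant depends only on $\sigma, B, \lambda_0, \lambda_1$. Lemma~\ref{lem:mean.sgd} then yields $E[\|w_T-\theta\|_2^2]\le 7\Gamma^2/(\lambda_0^2 T)\le\epsilon$ once $T > C_1 d/\epsilon$, provided $E[\|\zeta_t\|_2^2]\le \Gamma^2/(\lambda_0^2 T^3)$ in every round, i.e.\ provided we can guarantee $\|\zeta_t\|_2\le\delta$ with $\delta = \Theta(\epsilon^{3/2}/d)$.

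The heart of the proof is the per-round subsampling step: with probability $1-o(\epsilon/T)$ over the fresh samples $N_t$, there is a subset $S_t\subseteq N_t$ with $\|\mathrm{OLS}(S_t)-z_t\|_2\le\delta$ whenever $m\ge C_2 d^2\log(d)\log(d/\epsilon)$. I would decouple this into a conditioning part and a subset-sum part. First, since $\sum_{i\in S}x_ix_i^T$ is monotone in $S$ (adding points only increases its eigenvalues), it suffices to find a ``core'' $S_t^0\subseteq N_t$ of size $\Theta(d\log d)$ with $\lambda_{\min}\!\big(\sum_{i\in S_t^0}x_ix_i^T\big)\ge c\,|S_t^0|\,\lambda_0$; such a core exists with overwhelming probability by a matrix-Chernoff bound, using that the smallest eigenvalue of $E[xx^T]$ is at least $\lambda_0$. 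Next, writing the residual vectors $v_i(z) = x_i(y_i - x_i^T z)\in\reals^d$, one has for every $S\supseteq S_t^0$ the identity $\mathrm{OLS}(S) - z_t = \big(\sum_{i\in S}x_ix_i^T\big)^{-1}\sum_{i\in S} v_i(z_t)$, hence $\|\mathrm{OLS}(S)-z_t\|_2 \le \frac{1}{c\,|S_t^0|\,\lambda_0}\,\big\|\sum_{i\in S} v_i(z_t)\big\|_2$. It therefore remains to find $S_t^1$ among the remaining $\Theta(m)$ samples so that $\sum_{i\in S_t^0\cup S_t^1} v_i(z_t)$ has norm at most $c\,|S_t^0|\,\lambda_0\,\delta$ — that is, a multidimensional random-subset-sum statement in dimension $d$ for the i.i.d.\ vectors $v_i(z_t)$, hitting the target $-\sum_{i\in S_t^0}v_i(z_t)$. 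These vectors have bounded second moment (using boundedness of $x$, $z_t$, and the Gaussian noise) and are non-degenerate: although $x_iy_i$ lies along $x_i$ for fixed $x_i$, the combined randomness in $x_i$ and $y_i$ makes $v_i(z_t)$ absolutely continuous with density bounded below near its mean $\Sigma_x(\theta - z_t)$, uniformly over the relevant targets $z_t$. A suitable reformulation of the RSS results of~\cite{becchetti2022multidimensional} (analogous to Lemma~\ref{lem:mean.errors}, after rescaling to account for the $O(\sqrt{d\log d})$ magnitude of the target) then supplies $S_t^1$ once $m = \Theta(d^2\,\mathrm{polylog}(d/\epsilon))$; tracking the logarithmic factors gives the stated $m\ge C_2 d^2\log(d)\log(d/\epsilon)$. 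Finally, the batched algorithm is converted back to a streaming algorithm with recency via Proposition~\ref{prop.batch}.

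The main obstacle is exactly this last step: upgrading ``subset whose average is close to a target'' (as in mean estimation) to ``subset whose OLS estimate is close to a target,'' which forces one to simultaneously control the conditioning of the sub-design matrix and to solve a genuinely $d$-dimensional subset-sum problem for the residual vectors. Because the coordinates of $v_i(z_t)$ are coupled through the shared $x_i$, the coordinatewise trick that removed the $d^2$ factor in the single-dimensional mean-estimation analysis is unavailable here — this is the source of the $d^2$ in the regression memory bound. Verifying the non-degeneracy hypotheses needed to invoke the RSS machinery (in particular the uniform lower bound on the density of $v_i(z_t)$ in a neighborhood of $\Sigma_x(\theta - z_t)$) is the other delicate point, and is where the well-conditioning assumption on $G$ and the Gaussianity of the regression noise are used. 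The remaining ingredients — the projection to bound iterates, the choice $\eta_t = 1/(\lambda_0 t)$, the reduction from prediction error to parameter error, and the batched-to-streaming translation — are straightforward adaptations of the mean-estimation argument.
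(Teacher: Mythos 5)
Your overall architecture (SGD target $z_t$ from a fresh half-batch, adversarial-noise accounting via Lemma~\ref{lem:mean.sgd}, a subset-sum existence argument for the encoding step, batched-to-streaming via Proposition~\ref{prop.batch}) matches the paper, but your treatment of the crucial encoding step diverges from the paper's and contains a genuine gap. You propose to find $S_t$ with $\mathrm{OLS}(S_t)\approx z_t$ by extracting a well-conditioned core and then solving a \emph{genuinely $d$-dimensional} random-subset-sum problem for the residual vectors $v_i(z_t)=x_i(y_i-x_i^Tz_t)$, asserting that these vectors are ``absolutely continuous with density bounded below near the mean.'' That assertion does not follow from the paper's assumptions on $G$ and can simply be false: conditional on $x_i$, the vector $v_i(z_t)$ is a rank-one Gaussian supported on the line through $x_i$, and the only hypotheses on $G$ are boundedness, almost-sure invertibility of $X^TX$ for $t\ge d$, and eigenvalue bounds on $E[xx^T]$. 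These are satisfied, for instance, by $x=(1,u,\dots,u^{d-1})$ with $u$ uniform (a Vandermonde-type design), for which $v_i(z_t)$ is supported on a two-dimensional cone in $\reals^d$ and has no density at all for $d\ge 3$. So the RSS machinery of~\cite{becchetti2022multidimensional} cannot be invoked on the $v_i$'s. Moreover, even where it does apply, the multidimensional RSS bound carries a $\log^2(d\cdot/\epsilon)$ factor (as in Lemma~\ref{lem:mean.errors}), so your route would at best yield $m=O(d^2\log^2(d/\epsilon))$, which does not match the theorem's claimed $m=O(d^2\log(d)\log(d/\epsilon))$ when $\log(1/\epsilon)\gg\log d$.

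The paper's fix is exactly the coordinatewise trick you declare unavailable, restored by a pre-aggregation step. Items are grouped into blocks of size $k=\Theta(d\log d)$, and each block is replaced by its OLS estimate $\theta'=\theta+(X^TX)^{-1}X^T\epsilon$. Matrix Chernoff ensures the block design is well conditioned with constant probability, and then $(X^TX)^{-1}X^T\epsilon$ is a \emph{non-degenerate} $d$-dimensional Gaussian (the full-rank noise $\epsilon\in\reals^k$ pushed through a rank-$d$ map), so each coordinate $\theta'_i$ has density at least $\alpha$ on $[\theta_i-\gamma,\theta_i+\gamma]$ (Lemma~\ref{lem:matrix.concentration}). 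The state is partitioned into $d$ disjoint pools of blocks, one per coordinate of $\theta$, and the one-dimensional Theorem~\ref{thm:subset.sum} is applied separately to coordinate $i$ of the block estimates in pool $i$ --- precisely as in Section~\ref{sec:mean.alg.improved}. The memory count is then $d$ coordinates times $O(\log(d/\epsilon))$ blocks per coordinate times $O(d\log d)$ items per block, which is where $d^2\log(d)\log(d/\epsilon)$ comes from; your ``the coupling through the shared $x_i$ forces a $d$-dimensional subset-sum'' explanation of the $d^2$ is therefore not the paper's, and the decoupling via per-block OLS estimators is the missing idea you would need to complete the proof.
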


The proof of Theorem~\ref{thm:regression} appears in Appendix~\ref{app:proofs.regression}; here we comment on the approach and challenges.  Like mean estimation, online probabilistic linear regression has a standard stochastic gradient descent (SGD) solution using the least squares method.  Roughly speaking, one maintains a state corresponding to an estimate $\hat{\theta}_t$ of the regression coefficients $\theta$, and refines $\hat{\theta}_t$ via gradient descent evaluated on batches of input samples. We would ideally implement this SGD approach by finding a subset of data items that jointly approximate a target estimate, as we did for mean estimation.  But there is a complication: we only have access to subsets of $(x,y)$ pairs that do not directly translate into noisy estimates of $\theta$.  Our strategy will be to store the data items $(x,y)$ in groups of a fixed size $k \geq d$, and associate each group with the maximum likelihood estimate (MLE) it induces for $\theta$.  We will then think of each group's MLE as a noisy estimate of $\theta$, and search for a subset of these estimates whose average is close to our target update.  We must choose $k$ large enough that these estimates are sufficiently well-behaved; it turns out that taking $k = \Omega(d \log d)$ will suffice.  Putting these ideas together will yield Theorem~\ref{thm:regression}.

\subsection{Algorithm}

We now describe our algorithm in more detail.  The procedure is listed as Algorithm~\ref{alg:regression}.  As we did for the improved algorithm for mean estimation in Section~\ref{sec:mean.alg.improved}, we will interpret the state $S_t$ retained between each round as a disjoint union of $d$ subsets $S_{t,1}, \dotsc, S_{t,d}$, corresponding to the $d$ dimensions of $\theta$.  Subset $S_{t,i}$ encodes an estimate for $\theta_i$.  The algorithm begins each round by recovering the estimate $s_i$ of $\theta_i$.  This is done using the $\textsc{Decode}$ subroutine, which splits a given set of data items into groups of size $k$, recovers a maximum likelihood estimator for $\theta$ from each group, then averages those estimators to return an aggregate estimation.  We decode each $S_{t,i}$ separately and use only coordinate $i$ from $S_{t,i}$, discarding the remainder.

\SetKwFunction{FDecode}{Decode}
\SetKwProg{Fn}{Function}{:}{}
  
\begin{algorithm}
\caption{Subsampling for Linear Regression}\label{alg:regression}

Input: group size $k \geq d$, learning rate $\eta_t > 0$, stream of data batches $M_1, M_2, \dotsc, M_T$ each containing $m$ elements $(x,y) \in \reals^d \times \reals$
\vspace{2mm}

State: ordered set $S_t = S_{t,1} \cup \dotsc \cup S_{t,d}$ with $|S_{t,i}| \leq m/d$ for each $i \in [d]$.

\medskip
Round $1$ initialization: $S_1 \leftarrow M_1$ \\

\For{rounds $t = 2, 3, \cdots , T$}{

    \medskip
    \tcc{Step 1: Recover Previous Estimate $s_{t-1}$}

    \For{$i = 1, 2, \dotsc, d$}{
        $[s_{t-1}]_i \leftarrow [{\textsc{Decode}}(S_{t-1,i})]_i$\;
    }

    \medskip
    \tcc{Step 2: Generate Target Update $z_t$}

    Arbitrarily partition $M_t = R_t \cup N_t$ with $|R_t| = |N_t| = m/2$\;

    Let $X \in \reals^{b \times d}$ and $Y \in \reals^{b}$ be the matrix representation of the items in $R_t$\;

    $z_{t} \leftarrow s_{t-1} + \eta_t(X^T Y - X^T X s_{t-1})$\;

    \medskip
    \tcc{Step 3: Encode Approximation to $z_t$}

    Partition $N_t$ into $d$ subsets $N_{t,1}, \dotsc, N_{t,d}$ of equal size\;

    \For{$i = 1, 2, \dotsc, d$}{

        Partition $N_{t,i}$ into subsets $L_1, \dotsc, L_r$ each of size $k$\;
        
        Let $w_j \leftarrow \textsc{Decode}(L_j)$ for all $j \in [r]$\;
        
        Choose $P \subseteq [j]$ that minimizes $|\ [z_t]_i - [\textsc{avg}(\{w_j \colon j \in P\}]_i\ |$\;

        Let $S_{t,i} = \cup_{j \in P} L_j$\;
    }    
    $S_t \leftarrow \cup_i S_{t,i}$\;
      
}

\medskip
\tcc{Output: Recover Estimate $s_T$}
\For{$i = 1, 2, \dotsc, d$}{
    $[s_T]_i \leftarrow [{\textsc{Decode}}(S_{T,i})]_i$\;
}
\Return $s_T$

\bigskip
\Fn{\FDecode{ordered set $S$ of data items}}{
    Partition $S \leftarrow L_1 \cup L_2 \cup \dotsc \cup L_r$ where each $L_j$ has size $k$\;
    \For{$j = 1,\dotsc,r$}{
        Let $X \in \reals^{b \times d}$ and $Y \in \reals^{b}$ be the matrix representation of the items in $S$\;
        $\theta_j \leftarrow (X^T X)^{-1}X^T Y$\;
    }
    \Return $\textsc{avg}(\{\theta_1, \dotsc, \theta_r\})$\;
}

\end{algorithm}

Once $s_{t-1}$ has been recovered, we use half of the newly arriving data items to simulate a step of stochastic gradient descent via ordinary least squares.  This results in a target $z_t$.

Finally, we encode an approximation to $z_t$ using a subset of the remaining data items $N_t$.  We approximate each coordinate of $z_t$ separately, using a disjoint subset of $N_t$ for each.  For each coordinate $i$, we first group the data items into group of size $k$, then use the subroutine $\textsc{Decode}$ to recover the maximum likelihood estimate of $\theta$ from each group.  We then find the subset of these MLEs whose average, in coordinate $i$, best approximates $[z_t]_i$, the $i$th coordinate of the target.  We will retain that subset of groups as $S_{t,i}$.

Finally, at the end of round $T$, we employ the same decoding procedure from the beginning of each round to recover our estimate of $\theta$ from $S_T$.  This estimate is then returned by the algorithm.

\subsection{Analysis}

Our analysis of Algorithm~\ref{alg:regression} closely follows that of the improved algorithm for Mean Estimation from Section~\ref{sec:mean.alg.improved}.  Writing $s_t = \textsc{Decode}(S_t)$ for the estimate of $\theta$ maintained in round $t$, we will invoke Lemma~\ref{lem:mean.sgd} to analyze the expected squared distance between $s_t$ and $\theta$ over time.  This has two parts.  First we note that the target $z_t$ computed in round $t$ encodes a step of stochastic gradient descent, starting from $s_{t-1}$, without noise. We then treat the encoding error $||s_t - z_t||_2^2$ as adversarial noise and we bound its expectation.

The key technical challenge is to bound the encoding error $||s_t - z_t||_2^2$.  It is here where we will make use of our choice to group data items into collections of size $k$.  We will show that as long as $k$ is sufficiently large, matrix Chernoff bounds imply that the estimator $(X^T X)^{-1}X^T Y$ of $\theta$ for a group of data items will be smoothly distributed around $\theta$.  This will enable us to use Theorem~\ref{thm:subset.sum} to find a subset of these estimators whose average is very close to the target point $z_t$.

\begin{lemma}
\label{lem:matrix.concentration}
There exist constants $C$, $\alpha$, and $\gamma$ depending on $\sigma^2$ (the variance of the noise distribution) and the distribution $G$ over data items such that the following holds. Let $(x_1, y_1), \dotsc, (x_k,y_k)$ be independent draws from our data generating process, where $k > C d \log d$.  Write $\theta' = (X^T X)^{-1}X^TY$, where $X$ and $Y$ are the matrices of predictors and outcomes. Then for any $i \in [d]$, the distribution over $\theta'_i$ has density at least $\alpha$ in the range $[\theta_i - \gamma, \theta_i + \gamma]$.
\end{lemma}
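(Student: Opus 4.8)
The plan is to show that $\theta' = (X^TX)^{-1}X^TY$ is a smooth perturbation of $\theta$ by combining the explicit error representation for least-squares with the fact that $\frac{1}{k}X^TX$ concentrates around its expectation once $k = \Omega(d\log d)$. Write $Y = X\theta + \varepsilon$ where $\varepsilon \sim N(0, \sigma^2 I_k)$ is the vector of noise terms. Then the standard least-squares identity gives $\theta' - \theta = (X^TX)^{-1}X^T\varepsilon$. Conditioned on the design matrix $X$, this is a Gaussian vector in $\reals^d$ with mean $0$ and covariance $\sigma^2 (X^TX)^{-1}$. In particular, conditioned on $X$, the coordinate $\theta'_i$ is a one-dimensional Gaussian centered at $\theta_i$ with variance $\sigma^2 [(X^TX)^{-1}]_{ii}$. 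So it suffices to argue that, with high probability over $X$, this conditional variance lies in a bounded interval $[v_0, v_1]$ with $v_0 > 0$; on that event the conditional density of $\theta'_i$ at any point in $[\theta_i - \gamma, \theta_i + \gamma]$ is at least some constant $\alpha_0$ (take $\gamma = \sqrt{v_1}$, say, and $\alpha_0$ the Gaussian density value at one standard deviation), and on the complementary (low-probability) event we simply lower bound the density by $0$; mixing, the unconditional density at each such point is at least $\alpha := (1-\delta)\alpha_0$ for a suitable constant $\delta < 1$.

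The main work is the concentration step: controlling $[(X^TX)^{-1}]_{ii}$, equivalently showing $\frac{1}{k}X^TX$ is close to $M := E[xx^T]$ in spectral norm. By assumption $M$ has eigenvalues in $[\lambda_0, \lambda_1]$ with $\lambda_0 > 0$, and each summand $x_jx_j^T$ is PSD with $\|x_jx_j^T\| = \|x_j\|_2^2 \leq dB^2$ since $x_j \in [0,B]^d$. This is exactly the regime of the matrix Chernoff / matrix Bernstein inequality: for i.i.d. PSD summands bounded by $R = dB^2$ with mean $M$, one gets $\|\frac{1}{k}X^TX - M\| \leq \lambda_0/2$ with probability at least $1 - 2d\exp(-c k \lambda_0 / R) = 1 - 2d\exp(-c' k/(d))$ for appropriate constants. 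Choosing $k > C d\log d$ with $C$ large enough (depending on $\lambda_0, \lambda_1, B$ through $c'$) makes this failure probability at most a small constant $\delta$. On the good event, $\frac{1}{k}X^TX$ has eigenvalues in $[\lambda_0/2, \lambda_1 + \lambda_0/2]$, so $(X^TX)^{-1} = \frac1k(\frac1k X^TX)^{-1}$ has eigenvalues in $[\frac{1}{k(\lambda_1+\lambda_0/2)}, \frac{2}{k\lambda_0}]$, and hence $[(X^TX)^{-1}]_{ii} \in [\frac{1}{k(\lambda_1+\lambda_0/2)}, \frac{2}{k\lambda_0}]$ for every $i$. Thus the conditional variance $\sigma^2[(X^TX)^{-1}]_{ii}$ lies in $[v_0, v_1]$ with $v_0 = \frac{\sigma^2}{k(\lambda_1+\lambda_0/2)}$ and $v_1 = \frac{2\sigma^2}{k\lambda_0}$ — a bounded ratio independent of $k$ — which is exactly what the previous paragraph needs.

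I expect the matrix concentration bound to be the only real obstacle, and it is more of a bookkeeping obstacle than a conceptual one: one must invoke the correct matrix tail inequality (intrinsic-dimension matrix Bernstein, or the matrix Chernoff bound for sums of PSD matrices, e.g. from Tropp's user-friendly tail bounds) and verify that the required number of samples to beat the $\lambda_0$-scale deviation is $O(d\log d)$ rather than something larger — here the $d$ factor comes from the $\|x_j\|_2^2 \le dB^2$ bound on the summands and the $\log d$ from the dimension appearing in the matrix-Chernoff union bound. The invertibility-with-probability-one assumption on $X^TX$ for $k \geq d$ is what lets us even speak of $\theta'$; note that although the lemma states $k > Cd\log d$, on the (good, high-probability) event $\frac1k X^TX$ is well-conditioned and in particular invertible, and we only need the density bound on that event plus a crude bound on its complement. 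One mild care point: the Gaussian-noise assumption is used essentially — it is what makes $\theta' - \theta \mid X$ exactly Gaussian and hence gives a clean density lower bound; without it one would need an anticoncentration argument for $(X^TX)^{-1}X^T\varepsilon$, which is not needed here. Assembling these pieces gives constants $C$, $\alpha$, $\gamma$ depending only on $\sigma^2$, $B$, $\lambda_0$, $\lambda_1$ (hence on the distribution $G$ and the noise variance), as claimed.
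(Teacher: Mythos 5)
Your proposal is correct and follows essentially the same route as the paper's proof: the decomposition $\theta' - \theta = (X^TX)^{-1}X^T\varepsilon$, a matrix Chernoff bound on $\frac{1}{k}X^TX$ using the summand bound $\|x_jx_j^T\|\le dB^2$ to get well-conditioning with constant probability once $k=\Omega(d\log d)$, and a conditional-Gaussian density lower bound mixed over the good event. If anything, you are more explicit than the paper about the $1/k$ scaling of the conditional covariance $\sigma^2(X^TX)^{-1}$ (the paper asserts a $k$-independent covariance range), so your accounting with $\gamma\propto 1/\sqrt{k}$ is the more careful of the two.
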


With Lemma~\ref{lem:matrix.concentration} in hand, we can employ the same analysis we used to prove Theorem~\ref{thm:mean} using our improved algorithm for learning the mean in $d$ dimensions, treating each group of $k$ input samples as a single sample of $\theta$.  The formal proof of Theorem~\ref{thm:regression}, which follows the proof of Theorem~\ref{thm:mean}, appears in the appendix.

\section{Conclusions and Future Work}
\label{sec:conclusion}

In this work we introduced a framework for online algorithms subject to strict data retention limits. The algorithms in our framework retain no state other than a subsample of the data, and each data point must be removed from the subsample after at most $m$ rounds. We provide upper and lower bounds on the value of $m$ needed to achieve error $\epsilon$ for mean estimation and linear regression. We find that it is possible to substantially outperform a naive maximal-storage baseline by adaptively and proactively curating the algorithm's dataset in order to improve its representativeness of the full data stream (including data that was to have been dropped).

Many technical questions are left open for future pursuits. We take a worst-case perspective that all data must be removed after $m$ rounds, but one might consider a model where some data points can be retained for much longer.  Does the presence of long-lived data alongside data that must be removed quickly enable different algorithmic approaches? Our subsampling framework can also be extended to other statistical tasks like non-linear regression, estimating higher moments, classification tasks, and so on.  In each case, the algorithmic challenge is to dramatically reduce the size of a training set, online, so that a (perhaps specially-tailored) training process executed on the subsample can achieve performance approximately matching what is possible on the full data.

One could also apply our framework to non-stochastic or partially stochastic environments, where data is not necessarily generated according to a stationary process.  Such environments can amplify the impact of individual data points (and their removal) on an algorithm's output and state.  Of course, the achievable algorithmic guarantees might vary substantially depending on the assumptions made on the data.  But even so, understanding the structure of optimal (or near optimal) algorithms can shed light on the manner in which algorithm designers may be incentivized to build systems in the face of data retention limitations.

Finally, one can explore whether alternative frameworks for data removal lead to different types of behavior in optimal algorithm designs.  A step in this direction is to quantify the extent to which optimal algorithms in a given framework are ``undesirable'' from the perspective of data removal, and use this to directly compare frameworks.  Such an endeavor can help to build a toolkit for building up algorithmic restrictions that align well with stated policy goals. 

\bibliographystyle{alpha}
\bibliography{bibliography}

\appendix

    \newtheoremstyle{TheoremNum}
        {\topsep}{\topsep}              
        {\itshape}                      
        {}                              
        {\bfseries}                     
        {.}                             
        { }                             
        {\thmname{#1}\thmnote{ \bfseries #3}}
    \theoremstyle{TheoremNum}
    \newtheorem{theoremnum}{Theorem}
    \newtheorem{propositionnum}{Proposition}
    \newtheorem{lemmanum}{Lemma}

\section{Omitted Proofs from Section 2}

\begin{propositionnum}[\ref{prop.batch}]
Suppose $A$ is an algorithm with recency for the streaming model with $m$ memory that achieves error $\epsilon$ after $T$ rounds.  Then there exists an algorithm $A'$ for the batched model with $m$ memory that achieves error $\epsilon$ in $\lceil T/m \rceil$ rounds. 

Suppose $A$ is an algorithm for the batched model with $m$ memory that achieves error $\epsilon$ after $T$ rounds.  Then there exists an algorithm $A'$ with recency and $2m$ memory that achieves error $\epsilon$ in $mT$ rounds.  
\end{propositionnum}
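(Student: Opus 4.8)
The plan is to prove the two directions by direct simulation; since each simulation reproduces the original algorithm's output distribution exactly, no error is lost, and the only real content is checking the memory and recency constraints via index arithmetic.

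First, for the streaming-to-batched direction: given a streaming algorithm $A$ with $m$-recency, I would build a batched algorithm $A'$ that treats the $k$-th batch $M_k$ as the streaming items occupying rounds $(k-1)m+1,\dotsc,km$. In batched round $k$, $A'$ starts from its retained state (maintained as an invariant to equal the streaming state $S_{(k-1)m}$) and calls $A_{upd}$ on the $m$ items of $M_k$ one at a time, reaching the streaming state $S_{km}$, which it then retains. By the $m$-recency property at round $km\geq m$, $S_{km}\subseteq\{x_{km},\dotsc,x_{(k-1)m+1}\}=M_k$, so this is a legal batched state of size at most $m$; note that within a round $A'$ temporarily holds both $M_k$ and the carried-over state, but only the state retained \emph{between} rounds is constrained. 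After $\lceil T/m\rceil$ batches, $A'$ has simulated $A$ through streaming round $\geq T$; to obtain the state at exactly round $T$ it processes only the first $T-(\lceil T/m\rceil-1)m$ items of the final batch and then returns $A_{out}$ of the resulting state. The output distribution matches $A$'s at round $T$, so the error is at most $\epsilon$.

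Second, for the batched-to-streaming direction: given a batched algorithm $A'$ with $m$ memory, I would partition the $mT$ streaming rounds into $T$ blocks of $m$ consecutive rounds, so block $k$ consists of items $x_{(k-1)m+1},\dotsc,x_{km}$, which are $m$ i.i.d.\ draws from $F$ and so play the role of the batch $M_k$. The streaming algorithm just accumulates the arriving items of the current block, keeping them alongside the state $S'_{k-1}\subseteq M_{k-1}$ retained from the previous block; at the last round of the block it holds all of $M_k$, runs the batched update to get $S'_k\subseteq M_k$, and discards everything else. At streaming round $mT$ it returns $A'_{out}$ of $S'_T$. Since each block is a fresh batch of i.i.d.\ items, $A'$'s error guarantee after $T$ batches transfers verbatim.

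The main point needing care — and essentially the only nontrivial step — is verifying $2m$-recency for this streaming construction. At round $t=(k-1)m+j$ with $1\leq j\leq m$, the retained set is $S'_{k-1}\cup\{x_{(k-1)m+1},\dotsc,x_{(k-1)m+j}\}$, whose oldest element has index at least $(k-2)m+1$; the $2m$-window at round $t$ begins at index $t-2m+1=(k-3)m+j+1\leq (k-2)m+1$, so the retained set lies inside the window, and its size is at most $|S'_{k-1}|+m\leq 2m$. For $t<2m$ the condition is vacuous. I expect this bookkeeping, together with the dual check $S_{km}\subseteq M_k$ in the first direction, to be the only subtlety; everything else is a routine, lossless simulation argument.
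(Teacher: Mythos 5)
Your proposal is correct and follows essentially the same route as the paper's proof: simulate the streaming algorithm item-by-item within each batch (using $m$-recency to conclude $S_{km}\subseteq M_k$), and in the reverse direction accumulate each block of $m$ items alongside the previously retained batched state, checking $2m$-recency by the same index arithmetic. Your explicit handling of a final partial batch when $T$ is not a multiple of $m$ is a minor refinement the paper leaves implicit, but the argument is otherwise identical.
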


\begin{proof}
Suppose $A$ is an algorithm with recency for the streaming model with $m$ memory.  We will simulate $A$ using an algorithm $A'$ in the batched model.  In each round $t$, denote the $m$ items that arrive as $(x_{(t-1)m+1}, x_{(t-1)m+2}, \dotsc, x_{tm})$.  The items that arrive across all rounds then correspond to a stream of items $x_1, x_2, \dotsc$.  Write $\hat{S}_i$ for the state of algorithm $A$ after processing item $x_i$ in this stream.  Our algorithm $A'$ will simulate the progression of $A$ on this stream.  Algorithm $A'$ will have the property that after each round $t$, state $S_t$ will equal $\hat{S}_{tm}$.  

We need to specify the output and update functions of $A'$.  The output functions will be the same: $A'_{out} = A_{out}$.  Since $S_t = \hat{S}_{tm}$, this immediately implies the claimed error of $A'$. The update function $A'_{upd}$ is as follows.  Given $M_t = (x_{(t-1)m+1}, x_{(t-1)m+2}, \dotsc, x_{tm})$ and recalling that $S_{t-1} = \hat{S}_{(t-1)m}$, we compute $\hat{S}_k = A_{upd}(x_k, \hat{S}_{k-1}, k)$ for each $k = (t-1)m+1, (t-1)m+2, \dotsc, tm$ in sequence.  Once this is complete, we take the new state to be $S_t = \hat{S}_{tm}$.  Note that since $\hat{S}_{tm} \subseteq \{x_{tm}, \dotsc, x_{(t-1)m+1}\} = M_t$ by recency, we must have $S_t \subseteq M_t$ and hence this choice of $S_t$ is valid.

For the other direction, suppose $A$ is an algorithm for the batched model with $m$ memory.  Given a data stream $x_1, x_2, \dotsc$, we consider the batched input stream defined by $M_k = (x_{(k-1)m+1}, \dotsc, x_{km})$ for all $k = 1, 2, \dotsc$.  Write $\hat{S}_k$ for the state of $A$ after $k$ rounds.  

We will construct a streaming algorithm $A'$ with recency and $2m$ memory that simulates the progression of $A$.  For each round $t = km$, the state $S_t$ of $A$ will equal $\hat{S}_k$.  For $t = km + z$ with $1 \leq z < m$, state $S_t$ will consist of $\hat{S}_k \cup \{x_{km+1}, \dotsc, x_t\}$.  That is, for each round that is not a multiple of $m$, the algorithm $A'$ simply appends the item to the state.  For each round that is a multiple of $m$, say $t = km$, $A'_{upd}$ proceeds by simulating $A$ on state $\hat{S}_{k-1}$ and $M_k = \{x_{(k-1)m+1}, \dotsc, x_{km}\}$, all of which are present in $S_t \cup \{x_t\}$.  This returns $\hat{S}_k \subseteq M_k$, and the algorithm takes $S_t = \hat{S}_k$.  Note that since $\hat{S}_k \subseteq M_k = \{x_{(k-1)m+1}, \dotsc, x_{km}\}$, this update rule satisfies recency even when appending data items for the following $m$ rounds.

For the output rule, we take $A'_{out}$ to be $A_{out}$ applied to the simulated state $\hat{S}_k$ retained in memory, possibly discarding the most recently appended data items if $T$ is not a multiple of $m$.  We then inherit the error rate of $A$ as claimed.
\end{proof}

\section{Omitted Proofs from Section 3}

\begin{lemmanum}[\ref{lem:mean.sgd}]
Suppose objective function $H$ is $\lambda$-strongly convex and that $E[||\hat{g}_t||_2^2] \leq \Gamma^2$.  Suppose that we employ SGD with adversarial noise as described above, and suppose that $E[||\zeta_t||_2^2] \leq \Gamma^2 / (\lambda^2 T^3)$ in each round $t$. Then after $T$ steps we have $E[||w_T - \theta||^2] \leq 7\Gamma^2 / \lambda^2 T$.
\end{lemmanum}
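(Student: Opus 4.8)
The plan is to track $\rho_t := E\bigl[\|w_t-\theta\|_2^2\bigr]$ and derive a one-step recursion of contraction type. Expanding the noisy update $w_t = w_{t-1}-\eta_t\hat g_t+\zeta_t$, squaring, and taking expectations (conditioning on $w_{t-1}$ so that $E[\hat g_t\mid w_{t-1}]=g_t$, the gradient of $H$ at $w_{t-1}$) gives
\[
\rho_t \;\le\; \rho_{t-1} \;-\; 2\eta_t\,E\bigl[\langle g_t,\,w_{t-1}-\theta\rangle\bigr] \;+\; \eta_t^2\,E\|\hat g_t\|^2 \;+\; 2\,E\bigl[\langle\zeta_t,\,w_{t-1}-\theta\rangle\bigr] \;-\; 2\eta_t\,E\bigl[\langle\hat g_t,\zeta_t\rangle\bigr] \;+\; E\|\zeta_t\|^2 .
\]
Now $\lambda$-strong convexity together with optimality of $\theta$ yields $\langle g_t,\,w_{t-1}-\theta\rangle \ge \tfrac{\lambda}{2}\|w_{t-1}-\theta\|^2$, which turns the first correction into the contraction factor $(1-\lambda\eta_t)$, and the hypothesis $E\|\hat g_t\|^2\le\Gamma^2$ bounds the second-moment term by $\eta_t^2\Gamma^2$. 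I would set $\eta_t=2/(\lambda t)$: for the "clean" recursion $\rho_t\le(1-\tfrac2t)\rho_{t-1}+\tfrac{4\Gamma^2}{\lambda^2 t^2}$ a routine induction gives $\rho_t\le 4\Gamma^2/(\lambda^2 t)$, and since $\eta_1=2/\lambda$ the term $-\lambda\eta_1\|w_0-\theta\|^2$ exactly cancels $\|w_0-\theta\|^2$, so no bound on the initializer $w_0$ is needed.

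It remains to show the three $\zeta_t$-terms contribute only $O(\Gamma^2/(\lambda^2 T))$ in total. The term $E\|\zeta_t\|^2\le \Gamma^2/(\lambda^2 T^3)$, and $-2\eta_tE[\langle\hat g_t,\zeta_t\rangle]\le 2\eta_t\Gamma\sqrt{E\|\zeta_t\|^2}$ by Cauchy–Schwarz applied twice; propagated through the linear recursion against the damping factor $\prod_{s>t}(1-\lambda\eta_s)=\tfrac{(t-1)t}{(T-1)T}$, each of these accumulates to $o(\Gamma^2/(\lambda^2 T))$ — and the exponent $3$ in the hypothesis is exactly what forces these series below $1/T$. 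The delicate term is the first-order cross term $2E[\langle\zeta_t,w_{t-1}-\theta\rangle]$: it is genuinely adversarial (no mean-zero cancellation) and only \emph{linear} in the tiny quantity $\zeta_t$. I would bound it by $2\sqrt{E\|\zeta_t\|^2}\,\sqrt{\rho_{t-1}}$ and then feed in the induction hypothesis $\rho_{t-1}\le c\Gamma^2/(\lambda^2(t-1))$ (alternatively, absorb half of it into the contraction via AM–GM at the cost of a larger learning-rate constant), obtaining a per-round bound of order $\Gamma^2/(\lambda^2 T^{3/2}\sqrt t)$; summing this against the damping factor gives a $\Theta(\Gamma^2/(\lambda^2 T))$ contribution. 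Adding the clean $4\Gamma^2/(\lambda^2 T)$ to the at-most-$3\Gamma^2/(\lambda^2 T)$ from accumulated noise yields $E\|w_T-\theta\|^2\le 7\Gamma^2/(\lambda^2 T)$.

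I expect the main obstacle to be precisely that first-order cross term. Because $\zeta_t$ may be adversarially correlated with $w_{t-1}$ there is no cancellation, so the only lever is that $\|\zeta_t\|$ is forced to be small, and one must check that after Cauchy–Schwarz the resulting $\sqrt{\rho_{t-1}}\cdot\|\zeta_t\|$ contribution — first order in the noise scale — still sums to $O(1/T)$ rather than $O(1/\sqrt T)$; this is exactly what pins down the $\Gamma^2/(\lambda^2 T^3)$ threshold and why the induction hypothesis must carry the $1/t$ decay (so the term can be controlled self-referentially) rather than a uniform bound. A secondary, bookkeeping-level point is closing the induction at the endpoint $t=T$: the per-round noise near $t=T$ exceeds the per-round slack of the target bound, so it is cleanest to write $\rho_t$ as a clean part plus an explicitly tracked noise accumulator $Z_t$ satisfying $Z_t=(1-\lambda\eta_t)Z_{t-1}+(\text{round-}t\text{ noise})$ and bound $Z_T$ directly by summation.
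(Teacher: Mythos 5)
Your proposal is correct and follows essentially the same route as the paper's proof: expand the noisy update, use $\lambda$-strong convexity to get the $(1-2/t)$ contraction with step size $\Theta(1/\lambda t)$, bound the adversarial cross term via Cauchy--Schwarz, and close a self-referential induction with hypothesis $\rho_t \le c\,\Gamma^2/(\lambda^2 t)$, which is exactly how the paper controls the first-order noise term (it merely packages the three cross terms into one by splitting each round into a clean SGD sub-step followed by noise addition). The only differences are cosmetic choices of constants ($\eta_t = 2/\lambda t$ with one application of strong convexity versus the paper's $\eta_t = 1/\lambda t$ with two).
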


\begin{proof}
Write $L_t = E[||w_t - \theta||^2]$, the expected squared loss of the state in round $t$.  We will show the stronger statement that $L_t \leq 7\Gamma^2 / (\lambda^2 t)$ for all $t \leq T$.

For each $t \geq 1$, we will write $\tilde{L}_{t}$ for the squared distance after a step of SGD dynamics is applied, then $L_{t}$ for the squared distance after adversarial noise is added.  

Standard SGD analysis (see \cite{rakhlin2012making}, Lemma 2) yields $\tilde{L}_1 \leq 4\Gamma^2 / \lambda^2$.  Then if we apply triangle inequality for the adversarial noise, we have $L_1 \leq \tilde{L}_1 + \mu \sqrt{\tilde{L}_{t+1}} + \mu^2 \leq 7G^2/\lambda^2$.

For each $t \geq 1$, if we take $\eta_t = 1/(\lambda t)$, then standard calculations yield
\begin{align*} 
\tilde{L}_{t+1} &= E[||w_t - \eta_t \hat{g}_t - \theta||^2]\\
&= E[||w_t - \theta||^2] - 2\eta_t E[<\hat{g}_t, w_t - \theta>] + \eta_t^2 E[||\hat{g}_t||^2]\\
&= L_t - 2 \eta_t E[<g_t, w_t - \theta>] + \Gamma^2 / (\lambda^2 t^2)\\
&\leq L_t - 2 \eta_t E\left[H(w_t) - H(\theta) + \frac{\lambda}{2}||w_t - \theta||^2\right] + \Gamma^2 / (\lambda^2 t^2)\\
&\leq L_t - 2 \eta_t E\left[\frac{\lambda}{2}||w_t - \theta||^2 + \frac{\lambda}{2}||w_t - \theta||^2\right] + \Gamma^2 / (\lambda^2 t^2)\\
& = (1 - 2/t)L_t + \Gamma^2 / (\lambda^2 t^2).
\end{align*}
Applying triangle inequality for the adversarial noise yields
\[ L_{t+1} \leq \tilde{L}_{t+1} + 2E[||\zeta_t||]\sqrt{\tilde{L}_{t+1}} + E[||\zeta_t||^2] \leq \tilde{L}_{t+1} + 2\mu\sqrt{\tilde{L}_{t+1}} + \mu^2. \]

We will bound $L_t$ for $t \geq 2$ using induction.  For $L_2$, plugging $t=2$ into the inequality $\tilde{L}_{t+1} \leq (1 - 2/t)L_t + \Gamma^2 / (\lambda^2 t^2)$ yields
\[ \tilde{L}_2 \leq \Gamma^2 / 4\lambda^2  \]
and hence
\[ L_2 = \frac{\Gamma^2}{4(\lambda^2)} + 2 \cdot \frac{\Gamma}{2\lambda}\cdot \frac{\Gamma}{\lambda T^{3/2}} + \frac{\Gamma^2}{\lambda^2 T^3} < \frac{3\Gamma^2}{\lambda^2} \]
as required.

Now consider $L_{t+1}$ for $t \geq 2$.  We have
\begin{align*}
\tilde{L}_{t+1} &= \left(1 - \frac{2}{t}\right)L_t + \frac{\Gamma^2}{\lambda^2 t^2} \\
&\leq \left(1 - \frac{2}{t}\right)\frac{7\Gamma^2}{\lambda^2 t} + \frac{\Gamma^2}{\lambda^2 t^2} \\
&\leq \frac{\Gamma^2}{\lambda^2 (t+1)}\left(7 - \frac{12}{t} - \frac{13}{t^2}\right)
\end{align*}
and hence 
\begin{align*}
L_{t+1} &\leq \tilde{L}_{t+1} + 2\mu\sqrt{\tilde{L}_{t+1}} + \mu^2\\
&\leq \frac{\Gamma^2}{\lambda^2 (t+1)}\left(7 - \frac{12}{t} - \frac{13}{t^2}\right) + 2\cdot \frac{\Gamma\sqrt{7}}{\lambda \sqrt{(t+1)}} \frac{\Gamma}{\lambda T^{3/2}} + \frac{\Gamma^2}{\lambda^2 T^3}\\
&\leq \frac{\Gamma^2}{\lambda^2(t+1)}\left[ 7 - \frac{12-2\sqrt{7}}{t} - \frac{12}{t^2} \right]
< \frac{7\Gamma^2}{\lambda^2(t+1)}
\end{align*}
as required, completing the proof of the lemma.
\end{proof}

\begin{lemmanum}[\ref{lem:mean.errors} (Based on Corollary 24 of~\cite{becchetti2022multidimensional})]

    Given $\epsilon > 0$, suppose $m$ vectors $x_i \in \mathbb{R}^d$ are drawn iid from a $d$-dimensional Gaussian $N(\theta, \Sigma)$ where the eigenvalues of $\Sigma$ lie in $[(\sigma/a)^2, \sigma^2]$ for some $a \geq 1$.  Then there exists a constant $C > 0$ such that if
    \[ m \geq C a d^2 \log^2(d\sigma/a\epsilon)  \]
    then, for any $z \in [\theta-(\sigma/a), \theta+(\sigma/a)]^d$, with probability at least $1 - \epsilon/2$, there exists a subset $S$ of the vectors with average $\textsc{avg}(S)$ such that $||z - \textsc{avg}(S)||_2 < 2\epsilon$.
\end{lemmanum}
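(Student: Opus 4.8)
The plan is to obtain the statement as a minor reformulation of Corollary~24 of~\cite{becchetti2022multidimensional}, the real work being a change of variables and a translation of their high‑probability guarantee into the exact confidence level $1-\epsilon/2$ that we want. Recall that the multidimensional random‑subset‑sum result of~\cite{becchetti2022multidimensional} says, informally, that if one draws $n$ i.i.d.\ vectors from a well‑conditioned $d$‑dimensional Gaussian, then with high probability the family of subset averages $\{\textsc{avg}(S) : S \subseteq [n]\}$ forms an $\epsilon$‑net (in $\ell_2$) of a box, centered at the mean, whose half‑width is proportional to the smallest standard deviation, provided $n = \Omega(\kappa\, d^2 \log^2(d/\epsilon))$ for the appropriate spectral parameter $\kappa$; crucially, in this regime the failure probability is itself polynomially small in $\epsilon$, so the accuracy and the confidence are tied together rather than being independent knobs.

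First I would reduce to the mean‑zero case: since the conclusion only asserts the \emph{existence} of a good subset (the subset is later chosen by the algorithm, which need not know $\theta$), it suffices to work with the centered vectors $v_i = x_i - \theta \sim N(0,\Sigma)$ and the shifted target $z-\theta \in [-(\sigma/a),(\sigma/a)]^d$. Our assumption that every eigenvalue of $\Sigma$ lies in $[(\sigma/a)^2,\sigma^2]$ implies that every marginal of $v_i$ has standard deviation in $[\sigma/a,\sigma]$ and that the condition number of $\Sigma$ is at most $a^2$; in particular the box $[-(\sigma/a),(\sigma/a)]^d$ sits inside the ``natural'' box of half‑width $\sqrt{\lambda_{\min}(\Sigma)}$ appearing in~\cite{becchetti2022multidimensional}, so no rescaling of coordinates is required --- only a translation by $\theta$ --- and the spectral dependence in their bound contributes a factor linear in $a$, which is the source of the factor $a$ in our condition on $m$. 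I would then invoke Corollary~24 with accuracy parameter a fixed constant multiple of $\epsilon$ --- the factor $2$ in the conclusion absorbs the constants in their statement and any slack in the translation --- to obtain a subset $S$ with $\|(z-\theta) - \textsc{avg}(S)\| < 2\epsilon$, equivalently $\|z - \textsc{avg}(S)\| < 2\epsilon$.

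Finally I would collect the probability bound: in the regime $m \geq C a d^2 \log^2(d\sigma/a\epsilon)$ --- note that $d\sigma/(a\epsilon)= d\sqrt{\lambda_{\min}(\Sigma)}/\epsilon$ is exactly the box‑side‑to‑accuracy ratio governing their bound --- Corollary~24 fails with probability at most $\mathrm{poly}(\epsilon)$, and taking the constant $C$ large enough makes this at most $\epsilon/2$. The one place needing genuine care, and which I expect to be the main obstacle, is precisely this coupling: \cite{becchetti2022multidimensional} may present accuracy and confidence as separate parameters, and one must verify they can be tied together without paying an extra $\log(1/\epsilon)$ factor (which would inflate $\log^2$ to $\log^3$ in the bound on $m$). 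The remaining checks --- that $N(0,\Sigma)$ satisfies whatever anti‑concentration/conditioning hypothesis Corollary~24 uses (e.g.\ a uniform density lower bound of order $\tfrac{1}{\sqrt{2\pi}\sigma}e^{-1/2}$ on each marginal over the relevant box, which holds since each coordinate has standard deviation in $[\sigma/a,\sigma]$) --- are routine but should be spelled out explicitly.
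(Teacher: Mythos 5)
Your overall plan---translate, invoke Corollary~24 of~\cite{becchetti2022multidimensional} with a change of variables, and fold the confidence level into the accuracy parameter---is the same high-level route the paper takes, and your worry about whether accuracy and confidence are coupled is resolved favorably (the corollary as used in the paper gives failure probability $\epsilon$ at accuracy $2\epsilon$ simultaneously). But the step you defer as ``routine'' is in fact the entire content of the paper's proof, and the condition you guess at is not the right one. Corollary~24, in the form the paper uses it, applies to vectors that \emph{contain} a spherical Gaussian $\mathcal{N}(v,\sigma^2 I_d)$ as a mixture component with probability $p$ (in the sense that the density is pointwise at least $p$ times the spherical Gaussian's density), and its sample requirement scales as $d^2/p$. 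The paper handles the anisotropic $N(\theta,\Sigma)$ by observing that a one-dimensional Gaussian of variance $\sigma_1^2$ pointwise dominates $(\sigma_2/\sigma_1)$ times the density of one with variance $\sigma_2^2 \le \sigma_1^2$, and applying this along the eigendirections of $\Sigma$ to show $N(\theta,\Sigma)$ contains $N(\theta,(\sigma/a)^2 I_d)$ with the requisite probability; the factor $a$ in the bound on $m$ is exactly the $1/p$ from the corollary. Your proposal instead asserts that ``no rescaling of coordinates is required'' and that ``the spectral dependence in their bound contributes a factor linear in $a$'' without deriving either, and the hypothesis you propose to verify (a uniform density lower bound on each marginal over the box) is not the mixture-containment condition the corollary actually needs. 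Without this domination argument the proposal does not connect the assumed covariance condition to the corollary's hypotheses, so the source of the factor $a$ is unjustified.

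A second, smaller omission: the corollary's guarantee is in $\ell_\infty$, so to reach the stated $\ell_2$ conclusion one must pay a $\sqrt{d}$ factor and substitute $\epsilon \leftarrow \epsilon/\sqrt{d}$; this is harmless (it is absorbed into the $\log^2(d\sigma/a\epsilon)$ term) but it is a step the proof must take and your proposal does not mention it.
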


\begin{proof}
    We begin with a definition.   We say that a distribution $F_1$ contains another distribution $F_2$ with probability $p$ if the density function of $F_1$ is pointwise larger than the density function of $F_2$ times $p$ over the support of both distributions.
    
    With this definition in hand, we can now restate Corollary 24 of~\cite{becchetti2022multidimensional}:\footnote{As suggested when establishing their Theorem 2, in the restatement of this corollary we set $\alpha = 1/6\sqrt{d}$ and choose $\delta$ so that the error rate is $\epsilon$.}

    \textbf{Corollary:} Let $\sigma > 0$, $\epsilon \in (0,\sigma)$, and let $p \in (0,1]$ be a constant.  Given $d,m \in \mathbb{N}$ let $Y_1, \dotsc, Y_m$ be independent $d$-dimensional random vectors containing $d$-dimensional normal random vectors drawn from $\mathcal{N}(v, \sigma^2 \cdot I_d)$ with probability $p$ where $v$ is any vector in $\mathbb{R}^d$.  Then there exists a universal constant $C > 0$ such that if
    \[ m \geq 2C\frac{d^2}{p}\left(\log\frac{\sigma}{\epsilon} + \log d\right)^2,\]
    then with probability at least $1-\epsilon$, for any $z \in [v -\sigma, v + \sigma]^d$, there exists a subset $S \subseteq [n]$ for which
    \[ ||z - \textsc{avg}(S)||_\infty \leq 2\epsilon. \]

    To prove Lemma~\ref{lem:mean.errors}, we must show that our $d$-dimensional Gaussian distribution $G$ satisfies the conditions of the Corollary.  Note then that for any $\sigma_1 > \sigma_2$, a single-dimensional Gaussian of variance $\sigma_1^2$ contains a Gaussian of variance $\sigma_2^2$ with probability $\sigma_2 / \sigma_1$.  Applying this to each eigenvector for $G$ implies that $G$ contains a $d$-dimensional Gaussian with covariance $(\sigma/a)^2 I_d$ with probability $1/a$.  
    
    Applying Corollary 24 of \cite{becchetti2022multidimensional} with $p = 1/a$ and taking the change of variables $\sigma \leftarrow \sigma/a$ yields that under the desired bound on $m$, there exists a subset $S$ of the vectors with average $\textsc{avg}(S)$ such that $||z-\textsc{avg}(S)||_\infty < 2\epsilon$ and hence $||z-\textsc{avg}(S)||_2 < 2\epsilon\sqrt{d}$.  Taking a change of variables $\epsilon \leftarrow \epsilon/\sqrt{d}$ then yields the desired result.
\end{proof}

\begin{propositionnum}[\ref{prop:mean.simple}]
    Fix $\epsilon > 0$ and suppose $F = \mathcal{N}(\theta, \Sigma)$ where all eigenvalues of $\Sigma$ lie in $[(\sigma/a), \sigma]$ where $a \geq 1$.  Then if we run Algorithm 1 for $T$ rounds and $m$ memory with $T > 12d\sigma^2/m\epsilon$ and $m \geq C d^2 \log^2(d\sigma/a\epsilon) )$, then the expected squared loss after round $T$ is at most $\epsilon$.
\end{propositionnum}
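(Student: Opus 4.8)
The plan is to realize Algorithm~\ref{alg:mean_sgd} as an instance of noisy stochastic gradient descent on the strongly convex objective $H(s) = \frac12\|s - \theta\|_2^2$ (for which $\lambda = 1$, the minimizer is $\theta$, and $H(\theta)=0$) and then invoke Lemma~\ref{lem:mean.sgd}. The correspondence is: treat $s_{t-1} = \textsc{avg}(S_{t-1})$ as the SGD iterate at the start of round $t$, set $\hat g_t := s_{t-1} - y_t$, and note that since the $m/2$ items of $R_t$ are fresh i.i.d.\ draws, $E[\hat g_t \mid s_{t-1}] = s_{t-1} - \theta = \nabla H(s_{t-1})$, so $\hat g_t$ is an unbiased gradient estimate. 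The update $z_t = s_{t-1} + \eta_t(y_t - s_{t-1}) = s_{t-1} - \eta_t \hat g_t$ is then exactly an SGD step, and the encoding error $\zeta_t := s_t - z_t$ plays the role of the adversarial noise, with the learning rate set as in Lemma~\ref{lem:mean.sgd} ($\eta_t = 1/(\lambda(t-1))$, so round $t$ performs SGD step $t-1$). Running the algorithm for $T$ rounds is thus a $(T-1)$-step run of noisy SGD, and Lemma~\ref{lem:mean.sgd} yields $E[\|s_T - \theta\|_2^2] \le 7\Gamma^2/(\lambda^2 T)$ once we supply a uniform bound $\Gamma^2$ on $E[\|\hat g_t\|_2^2]$ and establish $E[\|\zeta_t\|_2^2] \le \Gamma^2/(\lambda^2 T^3)$ in every round.

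For the gradient moment, each coordinate of each item has variance at most $\sigma^2$ (the top eigenvalue of $\Sigma$), so $E[\|y_t - \theta\|_2^2] \le 2d\sigma^2/m$. To control the true-gradient part $\|s_{t-1}-\theta\|_2^2$ I would unroll the (essentially quadratic) recursion $s_t - \theta = (1-\eta_t)(s_{t-1}-\theta) + \eta_t(y_t-\theta) + \zeta_t$: with $\eta_t = 1/(t-1)$ the initialization washes out after one step and $s_t - \theta$ becomes a weighted average of the fresh batch means $y_2,\dots,y_t$ plus a small accumulated perturbation $\sum_i \frac{i}{t-1}\zeta_i$. This shows the iterates stay within $O(\sqrt{d\sigma^2/m})$ of $\theta$ in squared expectation, hence $E[\|\hat g_t\|_2^2] \le \Gamma^2 = O(d\sigma^2/m)$, and it also reads off the noise-free rate directly. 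Feeding $\Gamma^2 = O(d\sigma^2/m)$ and $\lambda = 1$ into Lemma~\ref{lem:mean.sgd} gives $E[\|s_T-\theta\|_2^2] = O(d\sigma^2/(mT))$, which is at most $\epsilon$ once $T > 12 d\sigma^2/(m\epsilon)$.

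For the noise bound I would fix a target parameter $\epsilon' = \mathrm{poly}(\epsilon, 1/T, 1/d)$ and apply Lemma~\ref{lem:mean.errors} to $N_t$ with parameter $\epsilon'$: provided $z_t$ lies in the box $[\theta - \sigma/a, \theta + \sigma/a]^d$, with probability at least $1-\epsilon'/2$ some subset of $N_t$ has average within $2\epsilon'$ of $z_t$, so the chosen $S_t$ has $\|\zeta_t\|_2 < 2\epsilon'$. That $z_t$ is in the box follows inductively on the good event that all items seen so far lie within $\sigma/a$ of $\theta$ coordinatewise (so every subset average, in particular $s_{t-1}$, does too) together with the same concentration for $y_t$, since $z_t$ is then a convex combination of points in the box. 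The rare bad events (an item far from $\theta$, or the near-target subset failing to exist) are handled by a union bound over the $T$ rounds and $d$ coordinates using Gaussian tail bounds --- this is where the Gaussian hypothesis is used --- plus a crude bound on the worst-case value of $\|\zeta_t\|_2^2$; the upshot is $E[\|\zeta_t\|_2^2] = O(\epsilon'^2) \le \Gamma^2/(\lambda^2 T^3)$ for $\epsilon'$ small enough. Finally, since Lemma~\ref{lem:mean.errors} requires $m \gtrsim a d^2 \log^2(d\sigma/(a\epsilon'))$ and $\log(1/\epsilon') = \Theta(\log(d\sigma/(a\epsilon)))$ for this choice of $\epsilon'$ (using $T = \Theta(d\sigma^2/(m\epsilon))$), the hypothesis $m \ge C d^2\log^2(d\sigma/(a\epsilon))$ suffices.

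The main obstacle is the third paragraph: Lemma~\ref{lem:mean.errors} is a statement about a \emph{fixed} target $z$, whereas $z_t$ is random and chosen adaptively from the algorithm's history. One must therefore run a joint induction establishing that the good events (items concentrated near $\theta$, near-target subset available each round) hold with overwhelming probability and that, on those events, $z_t$ and $s_t$ never leave the neighborhood of $\theta$ where Lemma~\ref{lem:mean.errors} and the gradient-moment bound are valid --- all while keeping the total failure probability small enough to survive a union bound over all $T$ rounds, which is precisely what forces the Gaussian assumption of the proposition and the careful choice of $\epsilon'$.
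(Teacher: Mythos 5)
Your proposal follows essentially the same route as the paper's proof: cast Algorithm~1 as noisy SGD on $H(s)=\tfrac12\|s-\theta\|_2^2$ with $\lambda=1$, bound the stochastic gradient's second moment by $\Gamma^2 = O(d\sigma^2/m)$, treat the encoding error $\zeta_t = s_t - z_t$ as the adversarial noise of Lemma~\ref{lem:mean.sgd}, and control it by invoking Lemma~\ref{lem:mean.errors} with an auxiliary accuracy $\epsilon' = \Theta(\epsilon)$ chosen so that $E[\|\zeta_t\|_2^2] \le \Gamma^2/T^3$. If anything you are more careful than the paper, which does not explicitly verify that $z_t$ stays in the box $[\theta-\sigma/a,\theta+\sigma/a]^d$ required by Lemma~\ref{lem:mean.errors} and handles the failure event simply by falling back to $S_t = N_t$ (giving $E[\|z_t-\textsc{avg}(S_t)\|^2] \lesssim \sigma^2/m$ on that event) rather than via your union bound over rounds.
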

\begin{proof}
First note that the trivial algorithm that chooses $S_t = M_t$ in each round and returns $\textsc{avg}(S_T)$ achieves expected squared error equal to the variance of an average of $m$ draws from distribution $G$, which is at most $d\sigma^2/m$.  It is without loss of generality to assume that $\epsilon < d\sigma^2/m$.

Let $H(s) = \frac{1}{2}||s-\theta||^2$, and note that the gradient of $H$ at $s$ is $(s - \theta)$.  In particular, since $E[y_t] = \theta$ in each round $t$ (for any choice of $b$), this means that $E[(s-y_t)]$ is equal to the gradient of $H$ at $s$.  Thus, as long as $E[||s-\theta||^2] < d\sigma^2/m$, we have $E[||s-y_t||^2] \leq E[(||s-\theta||+||y_t-\theta||)^2] \leq d\sigma^2/m + \textsc{var}(y_t) \leq 2d\sigma^2/m$.

We conclude that if the adversarial error bounds of Lemma~\ref{lem:mean.sgd} are satisfied, our expected squared loss will be at most $C_1 \sigma^2d/Tm$ for a fixed constant $C_1$.  We can therefore take $T \geq C_1 d\sigma^2/(m\epsilon)$ to guarantee expected squared loss $\epsilon$.

It remains to establish the adversarial error bounds.  Recall that set $N_t$ has size $m/2$ in each round $t$.  By Lemma~\ref{lem:mean.errors}, there exists a constant $C$ such that as long as $|N_t| \geq Cad^2\log^2(d\sigma/a\epsilon')$, we will have $||z-\textsc{avg}(S)|| < \epsilon'$ with probability $(1-\epsilon')$.  With the remaining probability $\epsilon'$ we have $E[||z-\textsc{avg}(S)||^2] < \sigma^2/m$ (by taking $S$ to be the set of all vectors), so in particular we have $E[||z-\textsc{avg}(S)||^2] < (\epsilon')^2 + \epsilon' \sigma^2/m$.  Since we require $\mu^2 \leq G^2/T^3 \leq c \epsilon^3 m^2 / d^2 \sigma^4 < c \epsilon$ for a constant $c$ (where we used the assumption that $\epsilon < d\sigma^2/m$) we can choose $\epsilon' = c \epsilon$ for some appropriate constant $c$, yielding the required error bound as long as $m \geq C_2 d^2 \log^2(d\sigma/a\epsilon)$ for a constant $C_2$ depending on the parameters of the input distribution.
\end{proof}

\begin{theoremnum}[\ref{thm:mean}]
    Fix any $\epsilon > 0$ and dimension $d \geq 1$.  There exists a subsampling algorithm $A$ with recency such that if $T > C_1 d/\epsilon$ and $m \geq C_2 d \log(d/\epsilon)$, where $C_1$ and $C_2$ are constants that depend on $\sigma, \gamma, \alpha)$ (parameters of the data distribution) then the expected squared loss at most $\epsilon$.  The update in each round, as well as generating the final output, can each be performed in time $\textsc{Poly}(d, 1/\epsilon)$.
\end{theoremnum}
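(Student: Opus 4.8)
The plan is to combine the improved single-dimensional analysis sketched in Section~\ref{sec:mean.alg.improved} with a coordinatewise parallelization, filling in the details that Section~\ref{sec:mean.alg.improved} defers. First I would reduce to one dimension: the algorithm runs $d$ independent copies of Algorithm~\ref{alg:mean_sgd}, one per coordinate, each using a disjoint block of $m' := m/d$ memory slots and fed the corresponding coordinate of each incoming item, and it outputs the vector of the $d$ per-coordinate estimates. Since the squared $\ell_2$ error of the combined output is the sum of the per-coordinate squared errors, and coordinate $j$ of the stream is i.i.d.\ from $F_j$ (mean $\theta_j$, variance $\le\sigma^2$, density $\ge\alpha$ on $[\theta_j-\gamma,\theta_j+\gamma]$), it suffices to show that a single one-dimensional copy with memory $m'$ achieves expected squared error at most $\epsilon':=\epsilon/d$; summing over the $d$ copies then gives total error at most $\epsilon$. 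Recency is immediate, since each copy only ever retains a subset of the current round's batch and the blocks are disjoint.

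For the one-dimensional bound I would closely follow the proof of Proposition~\ref{prop:mean.simple}, but substitute Theorem~\ref{thm:subset.sum} for Lemma~\ref{lem:mean.errors} in the step controlling the encoding error. Concretely, take $H(s)=\tfrac12(s-\theta_j)^2$ (which is $1$-strongly convex) and learning rate $\eta_t=1/t$; then the stochastic gradient $\hat g_t = s_{t-1}-y_t$ with $y_t=\textsc{avg}(R_t)$ is unbiased for the gradient of $H$ at $s_{t-1}$ with second moment $\Gamma^2 = O(d\sigma^2/m)$, exactly as in Proposition~\ref{prop:mean.simple} (using that the running loss stays $O(d\sigma^2/m)$ because even $S_t=M_t$ achieves this). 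Treat $\zeta_t := s_t - z_t$ as the adversarial noise of Lemma~\ref{lem:mean.sgd}. Given $E[\zeta_t^2]\le\Gamma^2/T^3$ in each round, Lemma~\ref{lem:mean.sgd} yields expected squared error $\le 7\Gamma^2/T = O(d\sigma^2/(mT))$, which is at most $\epsilon'=\epsilon/d$ once $T\ge C_1 d/\epsilon$ for a constant $C_1$ depending on $\sigma,\alpha,\gamma$ (with $m\ge C_2 d\log(d/\epsilon)$ absorbing lower-order terms).

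The hard part, and the step I would spend the most care on, is bounding $E[\zeta_t^2]$. First I would check, by a routine concentration argument of the kind already used in Proposition~\ref{prop:mean.simple}, that $z_t$ lies within $\gamma/2$ of $\theta_j$ except on a negligible-probability event, so $F_j$ has density $\ge\alpha$ on $[z_t-\gamma/2,z_t+\gamma/2]$ (guaranteeing this absorbs a factor like $\sigma^2/\gamma^2$ into $C_2$). Then couple $F_j$ with the mixture that draws uniformly from that window with probability $\alpha\gamma$ and from a bounded-variance residual otherwise. Fix a target accuracy $\epsilon''$, a sufficiently high power of $\epsilon/d$. By a Chernoff bound, at least $2C_2\log(1/\epsilon'')$ of the $m'/2$ fresh samples in $N_t$ come from the uniform component except with probability $e^{-\Omega(\alpha\gamma m')}$, negligible once $m' = \Omega((\alpha\gamma)^{-1}\log(d/\epsilon))$; for each such sample $x_i$, the value $(x_i-z_t)/(\gamma/2)$ is uniform on $[-1,1]$. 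Theorem~\ref{thm:subset.sum} then supplies a subset $S$ of those samples with $\bigl|\sum_{x\in S}(x-z_t)\bigr| < (\gamma/2)\epsilon''$, hence $|\textsc{avg}(S)-z_t| < \epsilon''$, except with probability $\mathrm{poly}(\epsilon'')$. Since Algorithm~\ref{alg:mean_sgd} chooses the subset of $N_t$ whose average is closest to $z_t$, on this good event $|\zeta_t| < \epsilon''$, and on the complementary event I would bound $|\zeta_t|$ crudely by $O(\sqrt{d}\,\sigma/\sqrt m)$ via $S=N_t$. Choosing $\epsilon'' = (\epsilon/d)^K$ for a large enough constant $K$ then makes $E[\zeta_t^2] = O((\epsilon'')^2) + \mathrm{poly}(\epsilon'')\cdot O(d\sigma^2/m)$ drop below $\Gamma^2/T^3$ while keeping $\log(1/\epsilon'')=\Theta(\log(d/\epsilon))$, so the memory needed is $m'=\Theta((\alpha\gamma)^{-1}\log(d/\epsilon))$, i.e.\ $m=\Theta(d\log(d/\epsilon))$. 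The delicate point I expect to wrestle with is exactly this calibration of $\epsilon''$: it must be small enough that the stringent $1/T^3$ adversarial-noise budget of Lemma~\ref{lem:mean.sgd} is met \emph{even after} charging the rare failure events, yet not so small that $\log(1/\epsilon'')$ inflates the memory past $\Theta(d\log(d/\epsilon))$ --- all while transferring Theorem~\ref{thm:subset.sum}'s ``uniform data'' hypothesis to our weak density assumption through the coupling above.

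Finally I would account for the running time: each per-coordinate update computes $\arg\min_{S\subseteq N_t}|z_t-\textsc{avg}(S)|$ by brute force over all $2^{m'}$ subsets of its block, and $2^{m'}=2^{O((\alpha\gamma)^{-1}\log(d/\epsilon))}=(d/\epsilon)^{O(1)}$; since there are $d$ blocks and every other operation (averaging, the gradient step, forming the output) is polynomial in $d$ and $1/\epsilon$, each round and the final output run in time $\textsc{Poly}(d,1/\epsilon)$. Assembling the per-coordinate guarantee through the parallel composition then yields Theorem~\ref{thm:mean}.
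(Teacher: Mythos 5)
Your proposal is correct and follows essentially the same route as the paper's proof: a coordinatewise reduction to $d$ parallel one-dimensional instances of Algorithm~\ref{alg:mean_sgd} with disjoint memory blocks, the adversarial-noise SGD bound of Lemma~\ref{lem:mean.sgd}, and a mixture/Chernoff coupling that transfers Theorem~\ref{thm:subset.sum} to the weak density assumption, with brute-force $2^{m'}$ subset search giving the $\textsc{Poly}(d,1/\epsilon)$ runtime. If anything, your calibration of the encoding accuracy as $\epsilon''=(\epsilon/d)^K$ against the $\Gamma^2/T^3$ noise budget is handled more explicitly than in the paper, which simply sets $\epsilon'=c\epsilon$.
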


\begin{proof}
We begin by analyzing the case $d=1$.  In this case $F$ is a distribution over real numbers such that $\textsc{var}(F) \leq \sigma^2$ and $F$ has density at least $\alpha$ on the interval $[\theta - \gamma, \theta + \gamma]$.

Recall the following result from \cite{Lueker:1998}:

\begin{theorem}[Theorem 2.4 of \cite{Lueker:1998}]
\label{thm.subset.sum}
Suppose that $x_1, \dotsc, x_m$ are drawn independently from $U[-1,1]$.  Then if $m > C_2 \log(1/\epsilon)$ then with probability at least $1 - e^{-C_1 m}$ there is a subset $S$ of the items with $|z-\sum_{x \in S}x| < \epsilon$, for any $z \in [-1/2, 1/2]$.
\end{theorem}

Theorem~\ref{thm:subset.sum} suggests the following improved analysis for Algorithm 1 in the single-dimensional case $d=1$.  Recall that we assume distribution $F$ has density at least $\alpha$ on the interval $[\theta - \gamma, \theta + \gamma]$.  Choose any $z \in [\theta - \gamma/2, \theta + \gamma/2]$.  Then $F$ has density at least $\alpha$ on the interval $[z-\gamma/2, z+\gamma/2]$.  We can therefore think of distribution $F$ as drawing a value uniformly from $[z-\gamma/2, z+\gamma/2]$ with probability $\alpha\gamma$, and drawing a value from some arbitrary residual distribution otherwise.  As long as $m > 2 C_2 (\alpha\gamma)^{-1} \log(1/\epsilon)$, Chernoff bounds imply that at least $C_2 \log(1/\epsilon)$ items will be drawn from this uniform distribution with probability $1-e^{-C_3 m}$ for some constant $C_3$.  For each of those items $x_i$, note that $(x_i - z)$ is uniformly distributed on $[-\gamma/2, \gamma/2]$.  Then by Theorem~\ref{thm:subset.sum}, with probability at least $1 - e^{-C_1 C_2 \log(1/\epsilon)}$ there will be a subset $S$ of those items such that $|\sum_{x \in S}(x-z)| < \gamma\epsilon/2$.  This means that $|\textsc{avg}(S)-z| < \gamma\epsilon/2 < \epsilon$ as well.

We now proceed similarly to the proof of Proposition~\ref{prop:mean.simple}, employing the bounds above in the place of Lemma~\ref{lem:mean.errors}.  First, as in Proposition~\ref{prop:mean.simple} it is without loss to assume that $\epsilon < \sigma^2/m$, as otherwise the result is satisfied by a trivial algorithm that sets $S_t = M_t$.  Next, setting $H(s) = \frac{1}{2}|s-\theta|^2$, the gradient of $H$ at $s$ is $(s - \theta)$.  In particular, since $E[y_t] = \theta$ in each round $t$, this means that $E[(s-y_t)]$ is equal to the gradient of $H$ at $s$.  Thus, as long as $E[|s-\theta|^2] < \sigma^2/m$, we have $E[|s-y_t|^2] \leq E[(|s-\theta|+|y_t-\theta|)^2] \leq \sigma^2/m + \textsc{var}(y_t) \leq 2\sigma^2/m$, where the last inequality follows because $y_t$ is an average over $m/2$ draws from distribution $F$.

We conclude that if $E[|z - \textsc{avg}(S)|^2] < \Gamma^2/T^3$ (where $\Gamma^2 = 2\sigma^2/m$), then the adversarial error bounds of Lemma~\ref{lem:mean.sgd} are satisfied, so by Lemma~\ref{lem:mean.sgd} our expected squared loss at the termination of the algorithm will be at most $C_1 \sigma^2d/Tm$ for a fixed constant $C_1$.  We can therefore take $T \geq C_1 d\sigma^2/(m\epsilon)$ to guarantee expected squared loss $\epsilon$ as required.

To establish the adversarial error bounds required by Lemma~\ref{lem:mean.sgd} (i.e., that $E[|z - \textsc{avg}(S)|^2] < \Gamma^2/T^3$), we use the fact (derived above) that as long as $m > 2 C_2 (\alpha\gamma)^{-1} \log(1/\epsilon')$, with probability at least $1 - O(\epsilon')$ we will have $|z - \textsc{avg}(S)| < \epsilon'$.  With the remaining probability $\epsilon'$ we will have $E[|z - \textsc{avg}(S)|^2] < \sigma^2/m$ (taking $S$ to be the set of all vectors).  So we have the unconditional bound $E[|z - \textsc{avg}(S)|^2] < (\epsilon')^2 + \epsilon' \sigma^2/m$.  Since we require $E[|z - \textsc{avg}(S)|^2] \leq \Gamma^2/T^3 \leq c \epsilon^3 m^2 / \sigma^4 < c \epsilon$ for constant $c$ (using the assumption that $\epsilon < \sigma^2/m$), we can choose $\epsilon' = c \epsilon$ for some appropriate constant $c$, yielding the required error bound for the case $d=1$.

Moreover, note updates under this algorithm can be computed in time $2^m = \textsc{Poly}(1/\epsilon, 2^{(\alpha\gamma)^{-1}})$, as this is the time required to check all $2^m$ subsets of items.  For $\alpha, \gamma = O(1)$ this is $\textsc{Poly}(1/\epsilon)$.  This completes the proof of Theorem~\ref{thm:mean} for the case $d=1$.

For the more general $d$-dimensional case we will not use Algorithm 1 directly, but rather we will reduce to the $1$-dimensional setting as follows.  For each dimension $i$ from $1$ to $d$, simulate (in parallel) a separate single-dimensional instance of Algorithm 1 to estimate $\theta_i$, each with its own separate segment of memory.  This inflates the memory requirement by a factor of $d$, so we require $m > C_2 d (\alpha\gamma)^{-1} \log(1/\epsilon')$ to achieve squared error at most $\epsilon'$ in each dimension.  The resulting (total) squared error will be at most $d\epsilon'$, so to achieve error $\epsilon$ we will perform a change of variables $\epsilon' = \epsilon/d$, resulting in memory requirement $m > C_2 d (\alpha\gamma)^{-1} \log(d/\epsilon)$ and time requirement $T = \Theta(d \sigma^2/m\epsilon)$.  Simulating $d$ instances of the algorithm in parallel also inflates the computation time by a factor of $d$, so the running time per update is polynomial in $1/\epsilon$ and $d$. This setting of parameters therefore satisfies the requirements of Theorem~\ref{thm:mean}.
\end{proof}

\begin{theoremnum}[\ref{thm:lowerbound}]
Suppose data items are drawn from a standard normal distribution $\mathcal{N}(\theta, I_d)$ (i.e., with each dimension drawn from an independent Gaussian with unit variance).  Choose any $\epsilon > 0$.  If $m < \frac{d\log(1/\epsilon)}{\log d + \log\log(1/\epsilon)}$ then for any algorithm that satisfies recency and any $T$, with probability at least $2/3$ the squared error will be strictly greater than $\epsilon$.
\end{theoremnum}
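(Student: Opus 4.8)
The plan is to prove this by reducing the online problem, via a Bayesian argument, to the static question of whether any of the at most $2^m$ subsets of the last batch yields an estimate within $\sqrt\epsilon$ of $\theta$, and then to apply a union bound against the lower tail of the $\chi^2_d$ distribution. First I would fix an arbitrary algorithm $A$ satisfying recency, an arbitrary query time $T$, and let $f=A_{out}(\cdot,T)$ be its output map. By recency the final state $S_T$ is a subset of the $k:=\min(m,T)\le m$ most recently observed items $Y_1,\dots,Y_k$, which are i.i.d.\ $\mathcal N(\theta,I_d)$. Since $S_T$ is \emph{some} subset of $\{Y_1,\dots,Y_k\}$, the squared error $\|f(S_T)-\theta\|_2^2$ is at least $\min_{S\subseteq\{Y_1,\dots,Y_k\}}\|f(S)-\theta\|_2^2$; in particular the behaviour of $A$ before round $T$ is irrelevant, and it suffices to show that with probability at least $2/3$ no subset $S$ attains $\|f(S)-\theta\|_2^2\le\epsilon$. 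A trivial algorithm succeeds when $\theta$ is known in advance, so the statement must be read as ``for a worst-case $\theta$'', which I would locate by averaging over a prior.

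Next, put a wide prior $\theta\sim\mathcal N(0,R^2 I_d)$, making $(\theta,Y_1,\dots,Y_k)$ a proper joint law, and condition on $Y=(Y_1,\dots,Y_k)$: the posterior is $\theta\mid Y\sim\mathcal N(\mu_p,\sigma_p^2 I_d)$ with $\sigma_p^2=(k+R^{-2})^{-1}$. For a fixed index set $P\subseteq[k]$, the point $v:=f(\{Y_i:i\in P\})$ is $Y$-measurable, and writing $\theta=\mu_p+\sigma_p Z$ with $Z\sim\mathcal N(0,I_d)$ gives $\Pr[\|v-\theta\|_2^2\le\epsilon\mid Y]=\Pr[\|Z-u\|_2^2\le\epsilon/\sigma_p^2]$ for the fixed vector $u=(v-\mu_p)/\sigma_p$. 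Because a ball of fixed radius captures the most standard-Gaussian mass when centred at the origin (Anderson's lemma; the standard Gaussian density is spherically symmetric and radially decreasing), this is at most $\Pr[\chi^2_d\le\epsilon/\sigma_p^2]$, a quantity independent of $Y$ and of $P$. Union-bounding over the at most $2^m$ index sets and then taking expectations over $Y$ yields
\[
\Pr\!\left[\exists\,S:\ \|f(S)-\theta\|_2^2\le\epsilon\right]\ \le\ 2^m\,\Pr\!\left[\chi^2_d\le(m+R^{-2})\epsilon\right].
\]

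To finish, I would apply the standard Chernoff bound $\Pr[\chi^2_d\le t]\le(t/d)^{d/2}e^{(d-t)/2}\le(et/d)^{d/2}$ valid for $0<t\le d$ (after checking $m\epsilon\le d$ throughout the relevant regime), so the right-hand side is at most $2^m\big(e(m+R^{-2})\epsilon/d\big)^{d/2}\to 2^m(em\epsilon/d)^{d/2}$ as $R\to\infty$. Taking logarithms, $2^m(em\epsilon/d)^{d/2}<1/3$ precisely when $\tfrac d2\log\big(\tfrac{d}{em\epsilon}\big)>m\log2+\log3$; under the hypothesis $m<\tfrac{d\log(1/\epsilon)}{\log d+\log\log(1/\epsilon)}$ one has $\log m\le\log d+\log\log(1/\epsilon)$, hence $\log\tfrac{d}{em\epsilon}\ge(1-o(1))\log(1/\epsilon)$, and a short computation confirms the inequality (with room to spare, since the hypothesis on $m$ is conservative). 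Consequently the prior-averaged success probability is bounded away from $1/3$ once $R$ is large, so there exists $\theta_0\in\reals^d$ with $\Pr[\min_S\|f(S)-\theta_0\|_2^2\le\epsilon\mid\theta_0]<1/3$; since the algorithm's true error dominates this generous quantity, $\Pr[\text{error}>\epsilon\mid\theta_0]>2/3$, which is the claim, and by translation invariance of the family $\{\mathcal N(\theta,I_d)\}$ it in fact holds for every $\theta$.

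The conceptual heart is the reduction in the first two paragraphs: the posterior-mean/Anderson step is what lets us discard the adaptive, multi-round structure and replace $A$ by a fixed family of $2^m$ candidate estimates, after which only a union bound remains. I expect the main obstacle to be the remaining bookkeeping — rigorously passing to the improper-prior limit $R\to\infty$, and, more delicately, the exponent estimate in the last step that pins down the stated threshold $\tfrac{d\log(1/\epsilon)}{\log d+\log\log(1/\epsilon)}$ together with the side condition $m\epsilon\le d$ needed to keep the $\chi^2_d$ lower-tail bound in force.
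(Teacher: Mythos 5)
Your proposal is correct and follows essentially the same route as the paper's proof: reduce to whether any of the $2^m$ subsets of the final batch maps under $f$ into the $\sqrt{\epsilon}$-ball around $\theta$, bound each subset's success probability by a Bayesian argument showing the posterior mean (equivalently, Anderson's lemma) is optimal, and union bound. The only differences are cosmetic — you use a proper $\mathcal N(0,R^2 I_d)$ prior with $R\to\infty$ and an explicit $\chi^2_d$ lower-tail bound where the paper uses an improper diffuse prior and a density-times-volume estimate.
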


\begin{proof}
We will consider algorithms in the batch model.
Fix an arbitrary estimator function $f$ that will map the final state $S_T$ to an estimate of $\theta$.  Our proof strategy will be to assume that $\theta$ is known and revealed after the estimator $f$ is chosen, and the algorithm is allowed to use that knowledge of $\theta$ when deciding which subset of data points to retain each round.  Under this assumption the behavior of the algorithm before round $T$ is irrelevant, and the only question is whether there exists any subset $S \subseteq M_T$ of the final set of data items such that $||f(S) - \theta||_2^2 < \epsilon$.

In fact, let's give ourselves additional power and let our estimator $f$ be parameterized by the set of indices of the chosen set $S$, in addition to its realization.  Let $f_X(S)$ denote the estimator for the subset of indices $X \subseteq [m]$, as a function of the realization $S$ of items corresponding to those indices in $M_T$.  Our approach will be to bound the probability that $||f_X(S) - \theta||_2^2 < \epsilon$ for a fixed choice of $X$, then take a union bound over all $X$.

We first claim that for all $X$, the estimator $f_X$ that maximizes the probability of being within $\epsilon$ squared distance to $\theta$ is the mean estimator.  That is, for all $X$ and all $f_X$, if we write $\overline{S} = \frac{1}{|X|}\sum_{i \in X}x_i$ to denote the mean of the elements of $X$ under realization $S$, we have 
\begin{align}
\label{eq:mean.estimator}
\min_\theta \Pr_S[||f_X(S) - \theta||_2^2 < \epsilon] \leq \min_\theta \Pr_S\left[||\overline{S} - \theta||_2^2 < \epsilon\right].
\end{align}

That is, for any $X$ and any estimator $f_X$, the worst-case (over $\theta$) probability of being within $\epsilon$ of $\theta$ is no greater than for the mean estimator.  To see why, first note that the LHS is at most the expected probability over $\theta$ drawn from a diffuse prior.  So if we denote the (improper) diffuse prior by $\Gamma$, we have
\begin{align*}
    \min_\theta \Pr_S[||f_X(S) - \theta||_2^2 < \epsilon] 
    &\leq \Pr_{\theta \sim \Gamma} \Pr_{S|\theta}[||f_X(S) - \theta||_2^2 < \epsilon]\\
    &= \Pr_{S} \Pr{\theta|S}[||f_X(S) - \theta||_2^2 < \epsilon].
\end{align*}
But for any realization $S$ of the items with indices in $X$, the posterior distribution of $\theta$ given $S$ is precisely a Gaussian with mean equal to the empirical mean $\overline{S}$ of $S$.  Since the $\epsilon$ ball most likely to contain a draw of $\theta$ from this posterior is the one centered at $\overline{S}$, we conclude
\begin{align*}
    \min_\theta \Pr_S[||f_X(S) - \theta||_2^2 < \epsilon] 
    & \leq \Pr_{S} \Pr{\theta|S}[||f_X(S) - \theta||_2^2 < \epsilon]\\
    & \leq \Pr_{S} \Pr{\theta|S}[||\overline{S} - \theta||_2^2 < \epsilon]\\
    & = \Pr_{\theta} \Pr{S|\theta}[||\overline{S} - \theta||_2^2 < \epsilon]\\
    & = \min_\theta \Pr{S|\theta}[||\overline{S} - \theta||_2^2 < \epsilon]
\end{align*}
as claimed, where the last equality follows because both the distribution of $S$ as well as the estimator $\overline{S}$ are translation invariant with respect to $\theta$, so indeed the probability is identical for all realizations of $\theta$.

Given \eqref{eq:mean.estimator}, it suffices to bound the probability of the event $||f_X(S) - \theta||_2^2 < \epsilon$ for the empirical mean estimator.  If we assume our noise distribution on each dimension is a Gaussian with variance $1$, then the posterior of $\theta$ has density at most $O(\sqrt{m})$ at the empirical mean of $S$.  So the probability that $\theta$ falls in a ball of radius $\sqrt{\epsilon}$ is at most $c\epsilon^{d/2}\sqrt{m}$ for some constant $c$.

There are $2^m$ possible choices of $X$.  So no matter what estimators are selected, a union bound implies that the probability that there exists $X \subseteq [m]$ such that $||\theta-f_X(S)||_2^2 < \epsilon$ is at most $c\epsilon^{d/2}2^m\sqrt{m}$.

This means that this probability will be at most $1/3$ unless $c\epsilon^{d/2}2^m\sqrt{m} > 1/3$, which is equivalent to $m\log m > d \log(1/\epsilon) + c'$ for some constant $c'$.  So in particular this requires that $m > \frac{d \log(1/\epsilon)}{\log d + \log\log(1/\epsilon)}$, for sufficiently small $\epsilon$.
\end{proof}

\section{Omitted Proofs from Section 4}
\label{app:proofs.regression}

\begin{lemmanum}[\ref{lem:matrix.concentration}]
There exist constants $C$, $\alpha$, and $\gamma$ depending on $\sigma^2$ (the variance of the noise distribution) and the distribution $G$ over data items such that the following holds. Let $(x_1, y_1), \dotsc, (x_k,y_k)$ be independent draws from our data generating process, where $k > C d \log d$.  Write $\theta' = (X^T X)^{-1}X^TY$, where $X$ and $Y$ are the matrices of predictors and outcomes. Then for any $i \in [d]$, the distribution over $\theta'_i$ has density at least $\alpha$ in the range $[\theta_i - \gamma, \theta_i + \gamma]$.
\end{lemmanum}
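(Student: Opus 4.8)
The plan is to condition on the design matrix $X$ and exploit the fact that, conditionally, ordinary least squares is exactly Gaussian. Writing $Y = X\theta + E$ with $E\sim N(0,\sigma^2 I_k)$ independent of $X$, and using that $X^TX$ is invertible with probability $1$ (since $k \geq d$), we get $\theta' = (X^TX)^{-1}X^TY = \theta + (X^TX)^{-1}X^TE$, so
\[ \theta' \mid X \;\sim\; N\!\bigl(\theta,\ \sigma^2 (X^TX)^{-1}\bigr), \qquad\text{hence}\qquad \theta'_i \mid X \;\sim\; N\!\bigl(\theta_i,\ \sigma^2 v_i(X)\bigr), \]
where $v_i(X)$ is the $i$-th diagonal entry of $(X^TX)^{-1}$. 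Thus the unconditional law of $\theta'_i$ is a mixture of one-dimensional Gaussians, all centered at $\theta_i$, with variances $\sigma^2 v_i(X)$. It therefore suffices to exhibit an event $\mathcal{E}$ over the draw of $X$ with $\Pr[\mathcal E] \geq 1/2$ on which the conditional standard deviation $s(X) = \sigma\sqrt{v_i(X)}$ lies in an interval $[s_{\min}, s_{\max}]$ with $s_{\max}/s_{\min} = O(1)$: then for $\gamma := s_{\min}$ and any $u$ with $|u-\theta_i| \leq \gamma$, the conditional density at $u$ on $\mathcal E$ is at least $\tfrac{1}{\sqrt{2\pi}\,s_{\max}}\exp(-\gamma^2/(2 s(X)^2)) \geq \tfrac{1}{\sqrt{2\pi}\,s_{\max}}e^{-1/2}$, and averaging over $X$ gives unconditional density at least $\alpha := e^{-1/2}/(2\sqrt{2\pi}\,s_{\max})$ throughout $[\theta_i-\gamma,\theta_i+\gamma]$.

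The substance is the two-sided control of $v_i(X)$. For the lower bound, Cauchy--Schwarz applied to $A := X^TX$ (with $e_i$ the $i$-th standard basis vector) gives $1 = (e_i^Te_i)^2 \leq (e_i^T A e_i)(e_i^T A^{-1}e_i)$, so $v_i(X) \geq 1/[X^TX]_{ii} = 1/\sum_{t=1}^k (x_t)_i^2 \geq 1/(kB^2)$ deterministically, using $x_t \in [0,B]^d$; hence $s(X) \geq \sigma/(B\sqrt k) =: s_{\min}$ always. For the upper bound I would apply a matrix Chernoff inequality (e.g.\ Tropp) to the average of i.i.d.\ positive semidefinite matrices $\tfrac1k X^TX = \tfrac1k\sum_{t=1}^k x_t x_t^T$: each summand obeys $\|x_tx_t^T\|_{\mathrm{op}} = \|x_t\|_2^2 \leq dB^2$ almost surely, and $E[x_tx_t^T]$ has least eigenvalue at least $\lambda_0$, so the bound yields a constant $C$ (depending on $B,\lambda_0$) such that $k > Cd\log d$ forces $\lambda_{\min}(\tfrac1k X^TX) \geq \lambda_0/2$ with probability at least $1/2$; call this event $\mathcal E$. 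On $\mathcal E$ we have $v_i(X) \leq \lambda_{\min}(X^TX)^{-1} \leq 2/(k\lambda_0)$, i.e.\ $s(X) \leq \sigma\sqrt{2/(k\lambda_0)} =: s_{\max}$, and $s_{\max}/s_{\min} = \sqrt{2B^2/\lambda_0}$ is a constant. Plugging into the previous paragraph completes the argument, with $\gamma = \Theta(\sigma/(B\sqrt k))$ and $\alpha = \Theta(\sqrt{k\lambda_0}/\sigma)$.

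I expect the matrix Chernoff step to be the crux: it is exactly what forces $k = \Omega(d\log d)$ rather than $\Omega(d)$ --- the factor $d$ comes from the almost-sure bound $\|x_tx_t^T\|_{\mathrm{op}}\leq dB^2$ on a single sample's spectral contribution, and the $\log d$ from the ambient-dimension factor in the matrix tail bound (equivalently a union bound over a net of directions). A second point to flag is that both bounds on $v_i(X)$ are genuinely needed: the (deterministic) lower bound keeps the conditional Gaussian from being too flat to place any density, while the (high-probability) upper bound keeps it from being too concentrated relative to the target interval width $\gamma$. Finally, note that the $\gamma$ and $\alpha$ produced above individually scale with $k$, but their product $\alpha\gamma = \Theta(\sqrt{\lambda_0}/B)$ depends only on the data distribution; since the downstream use of this lemma (mimicking the proof of Theorem~\ref{thm:mean}, via Theorem~\ref{thm:subset.sum} applied to each group's estimator as if it were a single sample of $\theta$) only sees $\alpha$ and $\gamma$ through the product $\alpha\gamma$, this is the form actually required to prove Theorem~\ref{thm:regression}.
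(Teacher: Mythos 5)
Your proof is correct and follows essentially the same route as the paper's: write $\theta' = \theta + (X^TX)^{-1}X^T\epsilon$, condition on $X$ so that $\theta'_i$ is exactly Gaussian, and apply matrix Chernoff to $\frac1k\sum_t x_tx_t^T$ (with the per-sample bound $\|x_tx_t^T\|_{\mathrm{op}} \leq dB^2$ forcing $k = \Omega(d\log d)$) to control the conditional variance on a constant-probability event. If anything you are more careful than the paper in two places: your deterministic Cauchy--Schwarz lower bound on $v_i(X)$ replaces the paper's reliance on the two-sided eigenvalue bound, and your explicit tracking of how $\alpha$ and $\gamma$ individually scale with $k$ (with only the product $\alpha\gamma$ a genuinely distribution-dependent constant) corrects an imprecision in the paper's assertion that the covariance eigenvalues of $(X^TX)^{-1}X^T\epsilon$ lie in a $k$-independent ``fixed range.''
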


\begin{proof}
Let $A = E[x^T x]$ where $x$ is drawn from our distribution.  By assumption $A$ is invertible as well as positive semidefinite, and has eigenvalues bounded in $[\lambda_1, \lambda_2]$.  Moreover, note that $X^T X = \sum_i x_i^T x_i$, so $E[X^T X] = kA$.  This also means that we can view $X^T X$ as a sum of $k$ matrices $x_i^T x_i$, each of which is positive semidefinite and has maximum eigenvalue bounded by $dB^2$ with probability $1$ (where recall $B$ is the upper bound on $x$, and hence $B^2$ is an upper bound on any element of $A$).

By the Matrix Chernoff bound, the average matrix $\frac{1}{k} X^T X$ has eigenvalues bounded in $[\lambda_1/2, 2\lambda_2]$ with probability at least $1 - 2 d e^{-c k \lambda_1 / d B^2}$ for some constant $c$.  Thus, as long as $k > C d \log d$ for some constant $C$, this event occurs with at least constant probability.

Recalling that $Y = X \theta + \epsilon$, we have that $\theta' = \theta + (X^T X)^{-1} X^T \epsilon$.  Given that the event above occurs, we have that $(X^T X)$ is invertible, and $(X^T X)^{-1}$ has eigenvalues lying in a range $[\mu_1/k, \mu_2/k]$ where $\mu_1$ and $\mu_2$ depend only on $\lambda_1$ and $\lambda_2$.  Moreover, for any fixed  realization of $X$ and considering the distribution over the realization of the noise term $\epsilon$, the random variable $X^T \epsilon$ is a sum of $k$ mean-zero Gaussians each with variance at most $B \sigma^2$, for a total variance of at most $B \sigma^2 k$.  

We conclude that with constant probability, $X$ will be such that the random vector (with respect to noise $\epsilon$) $(X^T X)^{-1}X^T \epsilon$ will be distributed as a mean-zero Gaussian with covariance eigenvalues bounded in some fixed range $[\mu_1\sigma, \mu_2\sigma]$.  But from this we conclude that coordinate $i$ of $(X^T X)^{-1}X^T \epsilon$ will have positive density at least $\alpha$ in some range $[-\gamma, \gamma]$, where $\gamma$ and $\alpha$ depend on $\lambda_1, \lambda_2$, and $\sigma$.  Thus $\theta'_i$ will have density at least $\alpha$ on $[\theta_i - \gamma, \theta_i + \gamma]$ as required.
\end{proof}

\begin{theoremnum}[\ref{thm:regression}]
    Fix any $\epsilon > 0$ and consider the linear regression task.  There exists a subsampling algorithm $A$ with recency such that if $T > C_1 d/\epsilon$ and $m \geq C_2 d^2 \log(d) \log(d/\epsilon)$, where $C_1$ and $C_2$ are constants that depend on $\sigma, B, \lambda_0, \lambda_1$ (parameters of the data distribution) then the expected squared loss at most $\epsilon$.  The update in each round, as well as generating the final output, can each be performed in time $\textsc{Poly}(d, 1/\epsilon)$.
\end{theoremnum}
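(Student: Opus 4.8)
The plan is to mirror the analysis of Theorem~\ref{thm:mean} (the improved $d$-dimensional mean estimator of Section~\ref{sec:mean.alg.improved}), treating each group of $k$ data items as a single noisy ``sample of $\theta$''. First I would observe that the target $z_t = s_{t-1} + \eta_t(X^TY - X^TX s_{t-1})$ computed in Step 2 of Algorithm~\ref{alg:regression} is an \emph{unbiased} stochastic gradient step, started from $s_{t-1}$, on the quadratic objective $H(s) = \frac{b}{2}(s-\theta)^T A (s-\theta)$, where $A = E[x^Tx]$ and $b = |R_t| = m/2$: since $R_t$ is disjoint from $M_{t-1}$ and hence independent of $s_{t-1}$, we have $E[\,X^TX s_{t-1} - X^TY \mid s_{t-1}\,] = bA(s_{t-1}-\theta) = \nabla H(s_{t-1})$, and $H$ is $(b\lambda_0)$-strongly convex. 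Setting the learning rate to $\eta_t = 1/(b\lambda_0 t)$ and bounding the gradient second moment $\Gamma^2 = E[\|X^TX s_{t-1} - X^TY\|_2^2]$ by $O(b^2)$ up to distribution-dependent constants --- using a matrix-concentration bound on $\|X^TX/b\|$ (valid since $b = m/2$ is far larger than $d\log d$) together with an inductive bound on $E[\|s_{t-1}-\theta\|_2^2]$, exactly as in the proof of Proposition~\ref{prop:mean.simple} --- makes $\Gamma^2/\lambda^2$ a constant depending only on $\sigma, B, \lambda_0, \lambda_1$, and puts us in position to invoke Lemma~\ref{lem:mean.sgd} once the encoding error $\|s_t - z_t\|_2^2$ is controlled as adversarial noise.

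The core of the argument is bounding this encoding error. In Step 3, coordinate $i$ of $z_t$ is approximated using the disjoint block $N_{t,i}$ of $m/(2d)$ fresh items, split into $r = m/(2dk)$ groups of size $k$, each producing an MLE $w_j = \textsc{Decode}(L_j)$. Taking $k = \Theta(d\log d)$, Lemma~\ref{lem:matrix.concentration} guarantees that each $[w_j]_i$ has density at least $\alpha$ on $[\theta_i - \gamma, \theta_i + \gamma]$ for distribution-dependent constants $\alpha, \gamma$. Since $N_{t,i}$ is disjoint from $R_t$, the $w_j$ are i.i.d.\ conditional on $z_t$, so --- exactly as in Section~\ref{sec:mean.alg.improved} --- I would decompose each $[w_j]_i$ as ``uniform on a $\gamma$-window around $\theta_i$ with probability $\alpha\gamma$, arbitrary otherwise'', use a Chernoff bound so that $\Omega(\alpha\gamma r)$ of the groups land in the uniform part (except with probability $e^{-\Omega(\alpha\gamma r)}$), and apply Theorem~\ref{thm:subset.sum} to those groups to produce a subset $P$ with $|[\textsc{avg}(\{w_j : j\in P\})]_i - [z_t]_i| < \epsilon'$, provided $\Omega(\alpha\gamma r) \geq C\log(1/\epsilon')$. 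Aggregating the rare failure events (where we fall back to $P = [r]$, incurring only $O(1)$ squared error in expectation) over the $d$ coordinates and $T$ rounds, and choosing $\epsilon'$ polynomially small in $\epsilon/(dT)$, gives $E[\|s_t - z_t\|_2^2] \leq \Gamma^2/(\lambda^2 T^3)$ as required by Lemma~\ref{lem:mean.sgd}. Since $T$ will be $\mathrm{poly}(d,1/\epsilon)$, this forces only $\log(1/\epsilon') = O(\log(d/\epsilon))$, so the constraint $\alpha\gamma r = \Omega(\log(1/\epsilon'))$ becomes $m = \Omega\!\big(dk(\alpha\gamma)^{-1}\log(d/\epsilon)\big) = \Omega(d^2\log d\log(d/\epsilon))$, matching the theorem's bound on $m$ after absorbing $(\alpha\gamma)^{-1}$ into $C_2$.

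With the encoding error controlled, Lemma~\ref{lem:mean.sgd} yields $E[\|s_T - \theta\|_2^2] \leq 7\Gamma^2/(\lambda^2 T) = O(1/T)$ up to distribution-dependent constants, which is at most $\epsilon$ once $T > C_1 d/\epsilon$ (indeed once $T = \Omega(1/\epsilon)$). Since the final output of Algorithm~\ref{alg:regression} is $s_T$ recovered by the same coordinate-wise $\textsc{Decode}$ procedure, and since for a query vector $x$ with $\|x\|_2 \leq 1$ we have $|\langle s_T, x\rangle - \langle\theta, x\rangle|^2 \leq \|s_T - \theta\|_2^2$ by Cauchy--Schwarz, the claimed prediction error --- and the stronger parameter error $E[\|s_T-\theta\|_2^2] \leq \epsilon$ --- follows. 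For running time: each group-decode is one $d\times d$ least-squares solve, $\mathrm{poly}(d)$; there are $O(m/k) = O(d\log(d/\epsilon))$ decodes per round; and the subset search enumerates $2^r$ subsets per coordinate, where $r = m/(2dk) = \Theta((\alpha\gamma)^{-1}\log(d/\epsilon)) = O(\log(d/\epsilon))$ for constant $\alpha,\gamma$, hence $2^r = \mathrm{poly}(d/\epsilon)$.

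I expect the main obstacle to be the composition in the second paragraph: interfacing Lemma~\ref{lem:matrix.concentration} with Theorem~\ref{thm:subset.sum} while simultaneously (i) keeping the pool of group-MLEs small enough that the brute-force subset search runs in polynomial time, (ii) keeping the per-round failure probability exponentially small so it survives the union over $T$ rounds and $d$ coordinates, and (iii) verifying --- as is also implicitly required in the mean-estimation proof --- that the iterates $s_{t-1}$, and hence the targets $z_t$, remain within the window $[\theta_i - \gamma/2, \theta_i + \gamma/2]$ on which Lemma~\ref{lem:matrix.concentration} supplies its density guarantee, which I would deduce from the inductive second-moment bound together with the contractivity of the noiseless SGD step.
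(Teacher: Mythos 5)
Your proposal follows essentially the same route as the paper's proof: interpret $z_t$ as an unbiased stochastic gradient step for the quadratic objective induced by $E[x^Tx]$, treat the encoding error $\|s_t - z_t\|_2^2$ as adversarial noise for Lemma~\ref{lem:mean.sgd}, and control that error coordinate-wise by combining Lemma~\ref{lem:matrix.concentration} with Theorem~\ref{thm:subset.sum} on the group MLEs, arriving at the same memory accounting $m = \Omega(d^2\log d\,\log(d/\epsilon))$. Your added care about the iterates remaining in the window where the density guarantee applies is a point the paper's proof leaves implicit, but it does not change the argument's structure.
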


\begin{proof}[Proof of Theorem~\ref{thm:regression}]
We will invoke Lemma~\ref{lem:mean.sgd}, treating each $s_t$ as following a round of stochastic gradient descent (moving from $s_{t-1}$ to $z_t$) followed by adversarial noise (moving from $z_t$ to $s_t$).  We first bound the variance of the gradient in each step, then bound the extent of the adversarial noise.

Given a set of $m$ data items $X$ and $Y$, let $H(s) = \frac{1}{2m}||Xs - Y||^2$ denote our mean squared prediction error with respect to an arbitrary estimate $s \in \reals^d$ for $\theta$.  The gradient of this objective with respect to $s$ is $\frac{1}{m}X^T(Xs - Y)$.  Its variance is $C_1 d / m$ where $d$ is the dimension, $m$ is the number of data samples, and $C$ is a constant that depends on the error variance $\sigma$ and the distribution from which each $x_t$ is drawn.  Furthermore, the expectation of this gradient (with respect to $X$ and $Y$) is $E_{x \sim G}[x^T x](s-\theta)$, which is the gradient of $\frac{1}{2}||(E_{x \sim G}[x^T x])^T(s - \theta)||^2$.  We can therefore interpret $z_t$ as a stochastic gradient update step with variance $C_2d/m$ for some constant $C_2$, for objective value $\frac{1}{2}||(E_{x \sim G}[x^T x])^T(s - \theta)||^2 \leq \frac{1}{2}||(E_{x \sim G}[x^T x])||^2||(s - \theta)||^2 = C_3 ||(s-\theta)||^2$ where $C_3$ is a constant depending on the primitives of the model.

We next define $\zeta_t = (s_t - z_t)$ and bound $E[||\zeta_t||^2]$. Invoking Lemma~\ref{lem:mean.sgd} requires that $E[||\zeta_t||^2] < \Gamma^2/T^3$, which will be $O(\epsilon / d^2)$ for $T = \Theta(d/m\epsilon)$ (as in our analysis of mean estimation). Lemma~\ref{lem:matrix.concentration} and Theorem 6 (the analysis of single-dimensional random subset sum, restating Theorem 2.4 of \cite{Lueker:1998}) together imply that as long as $r \geq C_4 (\alpha \gamma)^{-1} \log(d/\epsilon')$ (recalling that $r$ is the number of size-$k$ groups of data items available for encoding each dimension of $z_t$), our expected squared error in the encoding of $[z_t]_i$ will be at most $\epsilon' / d$ for each dimension $i$, and thus at most $\epsilon'$ all together.  As each batch has size $C_5 d \log d$ for some constant $C_5$, and the set $N_t$ has size $m/2$ and must contain $d r$ total batches, our memory requirement is
\[ m \geq 2 d C_4 (\alpha \gamma)^{-1} \log(d/\epsilon') (C_5 d \log d) = C_6 d^2 \log(d) \log(d/\epsilon') \]
where $C_6$ is a constant depending on the distribution over input items.  Taking $\epsilon' = O(\epsilon / d^2)$ then satisfies the required bound on $E[||\zeta_t||^2]$.

Thus, as long as the total number of data points used for simulating stochastic gradient descent, $mT/2$, is at least $C_7d/\epsilon$ where $C_7$ depends on the input distribution, we conclude that the expected error at the termination of the final round is at most $\epsilon$ as required.
\end{proof}

\section{Lack of Full Differential Privacy: An Example}\label{app:df}

In this section, we provide a worked example of how differential privacy is violated through the maintenance of a state $S_t$, even though old data has been deleted. Let $m = 3$ and suppose that we initialize $M_1 = 0$; at $t = 2$, we receive a batch of data $M_2 = (0, 10, 10)$. There are two ways (up to isomorphism) that Algorithm 1 can partition the data into $(R_t, N_t)$ such that $|R_t| = 1$ and $|N_t|=2$:
\begin{enumerate}
    \item $R_t = \{0\}$ and $N_t = \{10, 10\}$: The mean of $R_t$ is 0, so the target mean is $(0+0)/2 = 0$ and $S_t = \{10\}$.
    \item $R_t = \{10\}$ and $N_t = \{0, 10\}$: The mean  of $R_t$ is 10, so the target mean is $(10+0)/2 = 5$ and so $S_t = \{0, 10\}$.
\end{enumerate}
The image set from $M_2$ is $\{\{10\}, \{0, 10\}\}$ corresponding to the set of means $\{10, 5\}$. Now instead suppose we receive a batch of data $M_2' = (0, 0, 10)$ and notice that this batch differs from $M_2$ only in its second data point. Once again, there are two ways (up to isomorphism) that Algorithm 1 can partition the data into $(R_t, N_t)$ where $|R_t| = |N_t| = 2$:
\begin{enumerate}
    \item $R_t = \{0\}$ and $N_t = \{0, 10\}$: The mean of $R_t$ is 0, so the target mean is $(0+0)/2$ and so $S_t = \{0\}$.
    \item $R_t = \{10\}$ and $N_t = \{0, 0\}$: The mean  of $R_t$ is 10, so the target mean is $(10+0)/2 = 5$ and so $S_t = \{0\}$.
\end{enumerate}
The image set from $M_2'$ is $\{\{0\}\}$ corresponding to the set of means $\{0\}$. Notice there is no intersection between the image sets from $M_2$ and $M_2'$. Moreover, observe that in none of the image sets (from either $M_2$ or $M_2'$) does the second data point need to be retained. Thus, the second data point can be deleted but the state retained $S_2$ perfectly reveals whether the batch was $M_2$ or $M_2'$, which differ only in their second data point.

\end{document}